\theoremstyle{thmstyleone}%
\theoremstyle{thmstyletwo}%
\theoremstyle{thmstylethree}%
\begin{document}

\title[Article Title]{Learning Continuous Network Emerging Dynamics from Scarce Observations via Data-Adaptive Stochastic Processes}







\author[1,2]{\fnm{Jiaxu} \sur{Cui}}

\author[2,3]{\fnm{Bingyi} \sur{Sun}}

\author[4]{\fnm{Jiming} \sur{Liu}}

\author*[1,2]{\fnm{Bo} \sur{Yang}}\email{ybo@jlu.edu.cn}

\affil[1]{\orgdiv{College of Computer Science and Technology}, \orgname{Jilin University}, \orgaddress{\country{China}}}

\affil[2]{\orgdiv{Key Laboratory of Symbolic Computation and Knowledge Engineering of Ministry of Education}, \orgname{Jilin University}, \orgaddress{ \country{China}}}

\affil[3]{\orgdiv{Public Computer Education and Research Center, Jilin University, Changchun, China}, \orgname{Jilin University}, \orgaddress{ \country{China}}}


\affil[4]{\orgdiv{Department of Computer Science}, \orgname{Hong Kong Baptist University}, \orgaddress{ \country{Hong Kong}}}


\abstract{
Learning network dynamics from the empirical structure and spatio-temporal observation data is crucial to revealing the interaction mechanisms of complex networks in a wide range of domains. 
However, most existing methods only aim at learning network dynamic behaviors generated by a specific ordinary differential equation instance, resulting in ineffectiveness for new ones, and generally require dense observations.
The observed data, especially from network emerging dynamics, are usually difficult to obtain, which brings trouble to model learning.
Therefore, how to learn accurate network dynamics with sparse, irregularly-sampled, partial, and noisy observations remains a fundamental challenge.
We introduce Neural ODE Processes for Network Dynamics (NDP4ND), a new class of stochastic processes governed by stochastic data-adaptive network dynamics, to overcome the challenge and learn continuous network dynamics from scarce observations.
Intensive experiments conducted on various network dynamics in ecological population evolution, phototaxis movement, brain activity, epidemic spreading, and real-world empirical systems,
demonstrate that the proposed method has excellent data adaptability and computational efficiency, and can adapt to unseen network emerging dynamics, producing accurate interpolation and extrapolation with reducing the ratio of required observation data to only about 6\% and improving the learning speed for new dynamics by three orders of magnitude.
}

\keywords{Complex networks, Network dynamics, Emerging spatio-temporal dynamics, Neural processes}



\maketitle

\section{Introduction}\label{sec1}

Learning the underlying dynamics mechanism of complex systems has been a fundamental research \cite{Understanding2022}, 
which almost dominates the understanding for the formation and evolution of complex phenomenons with significant impacts in a wide range of domains, such as the global climate anomalies \cite{McCright2014}, the emergence of public opinion influencing political and economic pattern \cite{9772695}, as well as the epidemic outbreaks endangering the lives and health of billions of people \cite{LIU2020100354}.
As a superior modeling tool, network can be used to depict spatio-temporal interactions among system components, where network dynamics refers to the complex interaction mechanism \cite{Newman2011,2013Universality,GOSAK2018118}.
Therefore, learning network dynamics from empirical observations that capture the topological structure and spatio-temporal behavior is crucial to revealing the interaction mechanism and predicting the system behaviors.

In network dynamics, a tough issue is to learn network emerging dynamics \cite{Global_trends_emerging_infectious_diseases,GOSAK2018118,LOPPINI2018140}.
For instance, as a kind of impressive network emerging dynamics in public health \citep{HUIGANG202146}, the emerging infectious diseases (EID) \citep{Global_trends_emerging_infectious_diseases,1993Emerging} are usually caused by new or newly identified pathogenic microorganisms, e.g., SARS-CoV, MERS-CoV, and SARS-CoV-2 \citep{LU2020565}, giving rise to the novel exposed dynamics behaviors \citep{LIU2020100354}.
The task poses particular challenges, i.e., the \textit{novelty} and data \textit{sparsity}.
The novelty is reflected in the undiscovered and new formulas or parameters of the dynamics equations, resulting in the inapplicability of the existing models.
Due to the fact that data acquisition is costly and time-consuming \cite{NMI_spatio_temporal_electric}, attention to early transmission is inadequate \citep{DBLP:conf/aaai/PeiY0D18}, and broken sensors or damaged memory units may occur frequently \cite{Tang_Yao_Sun_Aggarwal_Mitra_Wang_2020},
observed data is usually scarce, irregularly-sampled, partial and noisy, bringing troubles to model learning.

Fortunately, artificial intelligence technologies have gradually led scientific discovery, breaking through protein structure analysis \cite{AlphaFold}, assisting in the design of new drugs \cite{Gentrl}, and advancing mathematics \cite{AImathematics}, which also provides a bright road for learning network dynamics.
Representative symbolic regression-based network dynamics studies have emerged \cite{Gao2022} to actively promote the development of complexity and network sciences.
However, the performance of these methods is strictly governed by expert knowledge, such as the selection of basis functions \cite{Mangan2017, Gao2022}.

With the rapid development of deep learning, researchers have developed effective neural simulators for networked systems \citep{DBLP:journals/ans/ZhangZLWTXZ19,MurphyNC-GNNSpain2021,Fritz2022,DBLP:journals/corr/abs-2211-10594,10.1145/3447548.3467385,Wang2022,10.1007/978-3-031-20047-2_13,kdd2023_Generalizing_Graph_ODE,NEURIPS2020_ba484941,NDCN,ijcai2023p242}.
Modeling ordinary differential equations (ODEs) on networks for characterizing propagation mechanisms through neural networks, these methods are able to automatically learn expressive network dynamics via a data-driven fashion.
Recently, few attempts to learn continuous-time network dynamics from irregularly-sampled partial observations have been made by elaborately designing a spatio-temporal transformer to aggregate observations \citep{NEURIPS2020_ba484941} and a graph neural network as the ODE function to model continuous network interactions \citep{NDCN,ijcai2023p242}.
Despite extending to promising dynamic graphs \citep{10.1145/3447548.3467385}, across environment learning \citep{kdd2023_Generalizing_Graph_ODE} and brain networks \citep{Wang2022},
they still struggle with the relatively dense observations required for sufficient training, i.e., at least 40\% $\sim$ 80\% of all data is required as observations for model learning \citep{NEURIPS2020_ba484941,NDCN,ijcai2023p242}.
Moreover, these methods only learn network dynamics behavior generated by a single network ODE instance, resulting in ineffective processing for new behaviors with few observations.

To address the aforementioned limitations, we introduce Neural ODE Processes for Network Dynamics (NDP4ND), a new family of stochastic processes governed by stochastic data-adaptive network dynamics, simultaneously considering temporal dynamics and interactions on topological structures.
Our analysis on various complex network dynamics scenarios, including mutualistic interaction dynamics in ecosystems, second-order phototaxis dynamics, brain dynamics, compartment models in epidemiology, and real-world global epidemic transmission systems, indicates that the NDP4ND has data adaptability and computational efficiency, and can accurately interpolate and extrapolate the unseen network emerging dynamics with sparse, irregularly-sampled, partial, and noisy observations.

\section{Results}\label{sec2}

\subsection{Problem statement of learning network dynamics}
A networked system consists of two key components \citep{Newman2011,2013Universality}: network dynamics $\pi_{\phi} \in \Pi$ and network topology $A \in \mathcal{A}$,
where 
$\Pi$ is a network dynamics space and $\mathcal{A}$ is a topological structure space.
Generally, $\pi_{\phi}$ can be considered as an ordinary differential equation (ODE) instance, i.e., $\frac{dX_l(t)}{dt} = \pi_{\phi}(\{X_j(t)\}_{j=1:n},A_{l,1:n})$,
where, $X_l(t)\in \mathbb{R}^{d}$ represents the states on node $l$ at time $t$, $d$ is the state dimension, and $n$ is the system size.
$A_{l,j}$ declares the influence or flow from node $j$ to $l$.
$\phi$ is an assignment of parameters in an ODE template $\pi$, which is a parameterized ODE characterizing continuous state changes. 
Assigning different values to the parameters in $\pi$ will result in corresponding network ODE instances that produce various system behaviors.
Given $\pi_{\phi}$ and $A$, system behaviors can be characterized by the spatio-temporal trajectory function $\omega \in \Omega$, where, $\Omega$ is a trajectory function space.
When the initial state $X_l(0)$ is known, $\omega$ can be obtained by calculating the state at any time $t$ by solving an ODE initial-value problem (IVP) using numerical integration \citep{Stickler2016}, i.e.,  $X_l(t)=X_l(0)+\int_{0}^{t}{\pi_{\phi}(\{X_{j}(\tau)\}_{j=1:n},A_{l,1:n})}d\tau$.
As $\pi_{\phi}$ and $A$ in trajectory function govern state changes together,
the distinguishing feature of network dynamics is that the updating of states on the networked system is influenced by neighbors.


Herein, we provide a problem statement for learning network dynamics. 
Denote an observation on spatio-temporal series produced by a networked system as a triple
$(t,l,X_{l}(t))$, where $t \in \mathbb{R}$, $l\in \{1,2,...,n\}$, and $n$ is the number of nodes on the interested system.
Formally, given a network topology $A$ of a complex networked system and a set of observations of the spatio-temporal series produced by the system, 
i.e., $\mathcal{D}_{\mathbb{C}}=
\{(t^{\mathbb{C}}_1,l^{\mathbb{C}}_1,X_{l^{\mathbb{C}}_1}(t^{\mathbb{C}}_1)),...,(t^{\mathbb{C}}_{N_{\mathbb{C}}},l^{\mathbb{C}}_{N_{\mathbb{C}}},X_{l^{\mathbb{C}}_{N_{\mathbb{C}}}}(t^{\mathbb{C}}_{N_{\mathbb{C}}}))\}
$, where $N_{\mathbb{C}}$ is the number of observations, learning network dynamics refers to forecasting the state on any node $l$ at any time $t$; that is, calculating a posterior distribution
$p(X_{l}(t)\vert t, l, \mathcal{D}_{\mathbb{C}},A)$ based on empirical observations capturing the structure and spatio-temporal behavior.
Let $T_o$ denote the maximum value of all observed times in $\mathcal{D}_{\mathbb{C}}$.
When all $t$ in the prediction task is less than or equal to $T_o$, it becomes an interpolation task, and while all $t$ in the prediction task is greater than $T_o$, then it is an extrapolation task.
When $\mathcal{D}_{\mathbb{C}}$ comes from a network dynamics with a new ODE instance and its involving observations are sparse, irregularly-sampled, partial, and noisy, the problem could be viewed as learning network emerging dynamics that we are concerned about.

\begin{figure}[htbp]%
\centering
\includegraphics[width=1.\textwidth]{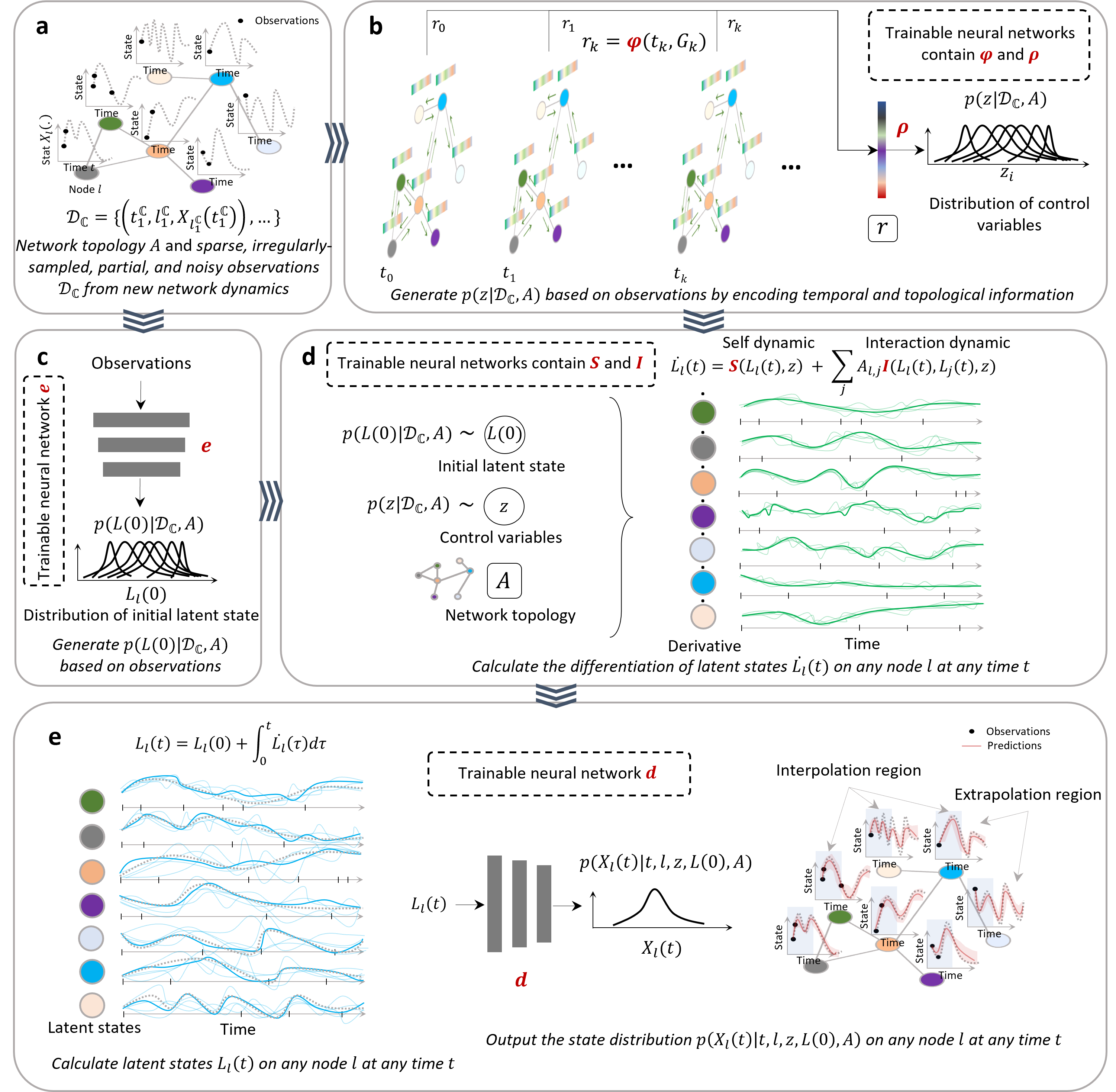}
\caption{
\textbf{The overall workflow of the NDP4ND.}
\textbf{a}, Empirical data from network dynamics, including network topology $A$ and state observations $\mathcal{D}_{\mathbb{C}}$.
\textbf{b}, Generating the distribution over the global random vector $z$, i.e., $p(z|\mathcal{D}_{\mathbb{C}},A)$ by encoding temporal and topological information.
\textbf{c}, Generating the distribution over the initial states in latent space, i.e., $p(L(0)|\mathcal{D}_{\mathbb{C}},A)$.
\textbf{d}, After obtaining the distributions of $z$ and $L(0)$, we perform dynamic propagation in the hidden space based on the universal skeleton of network dynamics and give the differentiation of hidden states on node $l$ at time $t$, i.e.,
$
    \dot{L}_{l}(t)
$.
\textbf{e}, Calculating the hidden states ${L}_{l}(t)$ and then output the state distribution on any node $l$ at any time $t$, i.e., $p(X_l(t)|t,l,z,L(0),A)$.
Note that neural networks ($\bm{\varphi}$, $\bm{\rho}$, $\bm{e}$, $\bm{S}$, $\bm{I}$, $\bm{d}$) are to parameterize the stochastic processes and achieve adaptive learning from data.
}
\label{flow}
\end{figure}

\subsection{Overview of neural ODE processes for network dynamics}


As noted, most of the existing methods only learn a specific trajectory sample in trajectory function space and are tricky to generalize to other trajectory samples from new network ODE or emerging empirical systems.
On the contrary,
we expect our method to represent the trajectory function space and adapt to unseen tasks with few observations.

To achieve this goal, our core idea is to use stochastic processes to model the trajectory function space $\Omega$.
Although one can independently model a stochastic process for the trajectory per node, due to the fact that nodes in the networked system interact with others, there are often important correlations between nodes.
We, thus, establish a general stochastic process to model the complex trajectory function space, while considering the correlation between the states of adjacent nodes.
That is, we model a collection $\{X_l(t)\}$ as a stochastic process $\mathcal{F}$, where the node state $X_l(t)\in\mathbb{R}^d$ is a random variable.
In principle, as time and space (network) expand, there are infinitely many such random variables in the networked system.

Specifically, we propose Neural ODE Processes for Network Dynamics (NDP4ND), a new family of stochastic processes parameterized by neural networks, simultaneously considering temporal dynamics and node interactions on topological structures, to approximate random spatio-temporal trajectory functions on networks.
By properly combining the universal skeleton of network dynamics \citep{2013Universality}, latent neural ODE \citep{NEURIPS2018_69386f6b}, and neural processes \citep{DBLP:journals/corr/abs-1807-01622} to build the family of distributions suitable for network dynamics modeling, the NDP4ND maintains their respective strengths, including broad network dynamics representation, continuous temporal changes, and data-adaptation.
In terms of theoretical existence, we state that the constructed family of finite-dimensional distributions satisfies the exchangeability and consistency conditions (Proposition 1 in the Section A.2 of the supplementary information).
According to the Kolmogorov Extension Theorem \citep{oksendal2003}, these conditions are sufficient to define a stochastic process. 
In other words, the stochastic process we established exist, and the family of distributions is just its.
Fig.~\ref{flow} shows the overall workflow of the NDP4ND.
After obtaining the empirical data from network emerging dynamics (Fig.~\ref{flow}a), including network topology $A$ and sparse, irregularly-sampled, partial, and noisy observations $\mathcal{D}_{\mathbb{C}}$,
we can infer a distribution over the global random vector $z$ by encoding temporal and topological information (Fig.~\ref{flow}b), i.e., $p(z|\mathcal{D}_{\mathbb{C}},A)$.
By integrating observations with the same time and combining them with the topology to form a graph $G(t)$,
the observation set $\mathcal{D}_{\mathbb{C}}$ can be then transformed into a graph set $\{(t_1,G(t_1)),...,(t_K,G(t_K))\}$.
We use $\bm{\varphi}$ to produce a representation for each pair in the graph set by $r_k = \bm{\varphi}(t_k,G(t_k))$, and then use $\bm{\rho}$ to give the distribution of $z$ based on an aggregation of all representations, i.e., $r=agg(\{r_k\}_{k=1}^{K})$, where, $agg$ is an aggregation operation with permutation invariance.
Simultaneously, the distribution over the initial states in latent space, i.e., $p(L(0)|\mathcal{D}_{\mathbb{C}},A)$, can be generated by a neural network $\bm{e}$ based on empirical data (Fig.~\ref{flow}c).
After inferring the distributions of $z$ and $L(0)$, we perform dynamic propagation in the hidden space based on the universal skeleton of network dynamics and give the differentiation of hidden states on node $l$ at time $t$ (Fig.~\ref{flow}d), i.e.,
$
    \dot{L}_{l}(t)=\bm{S}(L_{l}(t),z)+\sum_{j=1}^{n}A_{l,j}\bm{I}(L_{l}(t),L_{j}(t),z)
$,
where $\bm{S}$ and $\bm{I}$ are neural networks, representing self dynamics and interaction dynamics respectively, and global random vector $z$ can stochastically control the derivative function of network dynamics.
Then, we can calculate the hidden states ${L}_{l}(t)$ by $L_{l}(t) = L_{l}(0)+\int_{0}^{t}{\dot{L}_{l}(\tau)}d\tau$ and 
use neural network $\bm{d}$ to output the state distribution on any node $l$ at any time $t$, i.e., $p(X_l(t)|t,l,z,L(0),A)$ (Fig.~\ref{flow}e).
Note that neural networks ($\bm{\varphi}$, $\bm{\rho}$, $\bm{e}$, $\bm{S}$, $\bm{I}$, $\bm{d}$) are to parameterize the stochastic processes and achieve adaptive learning from data.
Please refer to Methods for detailed modeling of the stochastic process.

\subsection{Learning various network dynamics}

We present the results and analysis on various network dynamics in ecological population evolution, phototaxis movement, brain activity, and epidemic spreading.
Please refer to the Section B of the supplementary information for details of network dynamics and experimental settings.

\textbf{Mutualistic interaction dynamics in ecosystems.}
Mutualistic interaction dynamics among species in ecology capture the abundance $X_i(t)$ of species $i$ by integrating the mutualistic interactions from neighboring ecosystems, such as the incoming migration and negative growth \citep{JianxiNature2016}.
We consider five network topologies in this scenario, i.e., grid, random, power-law, small world, and community networks. 
By assigning randomly sampled values from a certain range to the tunable parameters $\phi$ in the network ODE of mutualistic interaction dynamics, we can obtain various ODE instances.
Then, based on different ODE instances, topologies, and initial states sampled from the distribution $\mathcal{I}$, we randomly produce $N_{Tr}$ network dynamics trajectories as the training set and $N_{Te}$ trajectories as the testing set.
Please note that the sampling ODE instances used to generate the testing set have not yet appeared in the training set.
That is to say, all testing ODEs can be considered as network emerging dynamics.
The irregular-and-sparse-sampled partial observations from each testing task are to trigger predictions.
We compared with the state-of-the-art neural dynamics methods on networks for interpolation and extrapolation tasks in terms of prediction error and similarity, i.e., the LG-ODE \citep{NEURIPS2020_ba484941}, NDCN \citep{NDCN}, and DNND \citep{ijcai2023p242}, who can deal with the irregular-sampled partial observations and learn continuous network dynamics.
The power comes from using ODE to model the continuous change in hidden space \citep{NEURIPS2020_ba484941, NDCN} or state space \citep{ijcai2023p242}.
The average ratio of observations in the testing network ODEs is $5.81\%$.
We see that the compared neural dynamics methods unsurprisingly underestimate the testing ODEs and extremely unstable because of the low sampling density, while our NDP4ND reaches the best results for both interpolation and extrapolation tasks, showing its effectiveness in learning on sparse and irregular-sampled observations from new ODEs (Extended Data Table~\ref{table_mutu})
We showcase the testing results of each method on a network ODE instance (Fig.~\ref{fig_mutu}a-e).
\begin{figure}[htbp]%
\centering
\includegraphics[width=1.\textwidth]{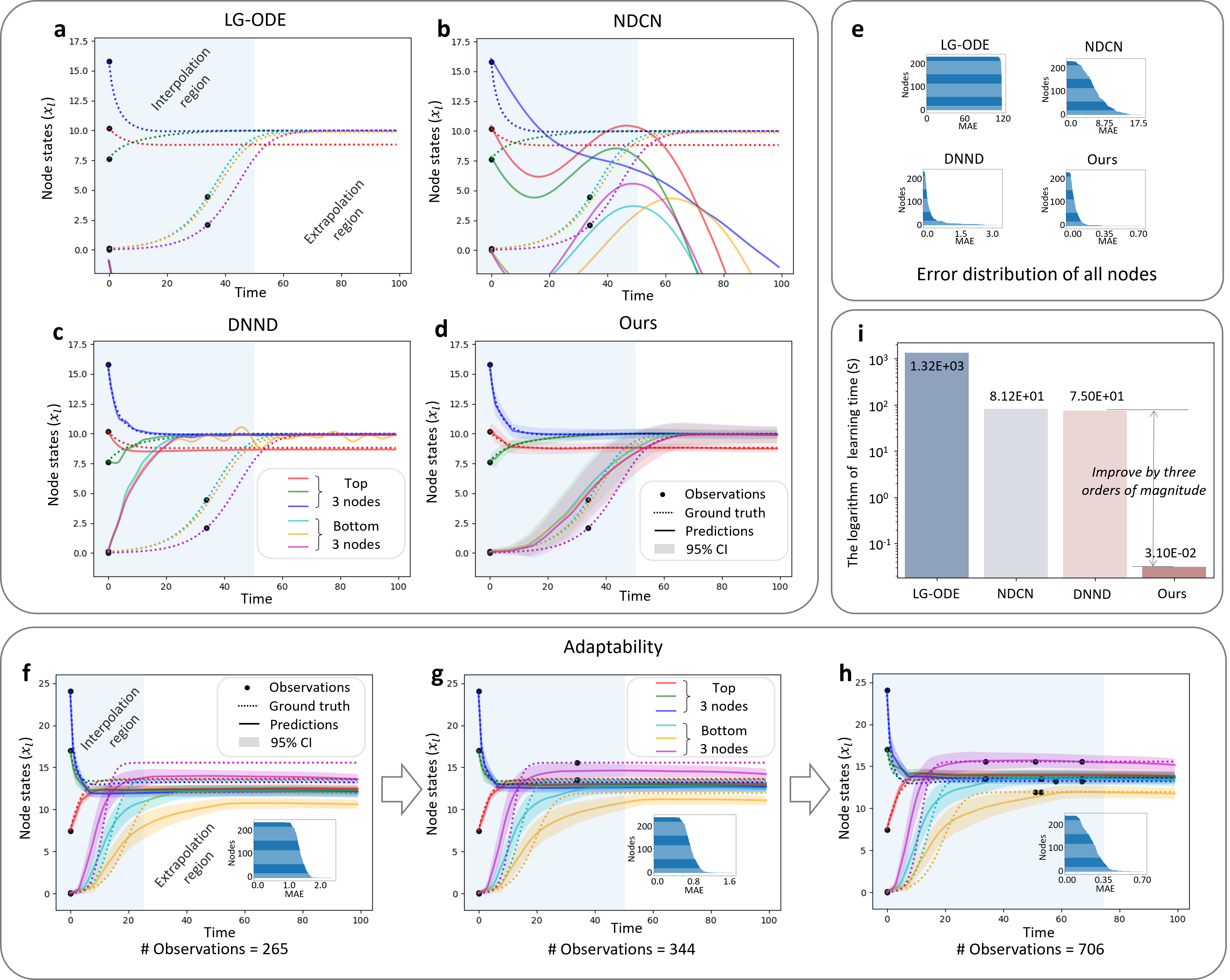}
\caption{
The interpolation and extrapolation results on \textit{mutualistic interaction dynamics}.
\textbf{a-d}, The testing results of the best-performing and worst-performing three nodes of the LG-ODE, NDCN, DNND, and our NDP4ND.
The ratio of observations in this testing network ODE is $3.06\%$.
The number of nodes and the maximum value of all observed times ($T_o$) are 225 and 50, respectively.
Our NDP4ND has the best performance in both interpolation and extrapolation, and ultimately stabilizes at two population values. 
\textbf{e}, The distributions of Mean Absolute Error (MAE) between predictions and ground truth for all nodes, demonstrating the high-precision predictions from our method.
\textbf{f-h}, Our NDP4ND can directly utilize newly arrived data and does not require any retraining, with adaptively rectifying predictions toward ground truth as observations increase.
Note that the subfigures only show the observation data on six nodes instead of all nodes.
\textbf{i}, The average learning time for all testing network ODEs, showing that our NDP4ND improves the learning speed for new dynamics by three orders of magnitude.
}\label{fig_mutu}
\end{figure}
The LG-ODE has not learned any effective network dynamics from a few observations (Fig.~\ref{fig_mutu}a) and the NDCN only fits well at the observed locations, while the result in the extrapolation region has a significant gap from the ground truth (Fig.~\ref{fig_mutu}b). 
Although the DNND can learn relatively stable network dynamics, it still underestimates the task (Fig.~\ref{fig_mutu}c).
Our NDP4ND has high-precision predictions for interpolation and extrapolation, and ultimately stabilizes at two population values (Fig.~\ref{fig_mutu}d-e).
Since the LG-ODE, NDCN, and DNND are aimed at learning a single network dynamics instance, it is necessary to retrain for each testing task on sampled observations.
Especially, when new observation data from testing ODEs arrives, they need to be retrained to use newly arrived observations.
On the contrary, ours does not require any retraining and can adaptively rectify predictions toward ground truth (Fig.~\ref{fig_mutu}f-h), benefiting from modeling stochastic processes.
By calculating the average learning time for processing all testing network ODEs, we see that our NDP4ND significantly improves learning speed by three orders of magnitude compared to the latest method, thanking to the adaptability to the data (Fig.~\ref{fig_mutu}i).

\begin{figure}[t]%
\centering
\includegraphics[width=1.\textwidth]{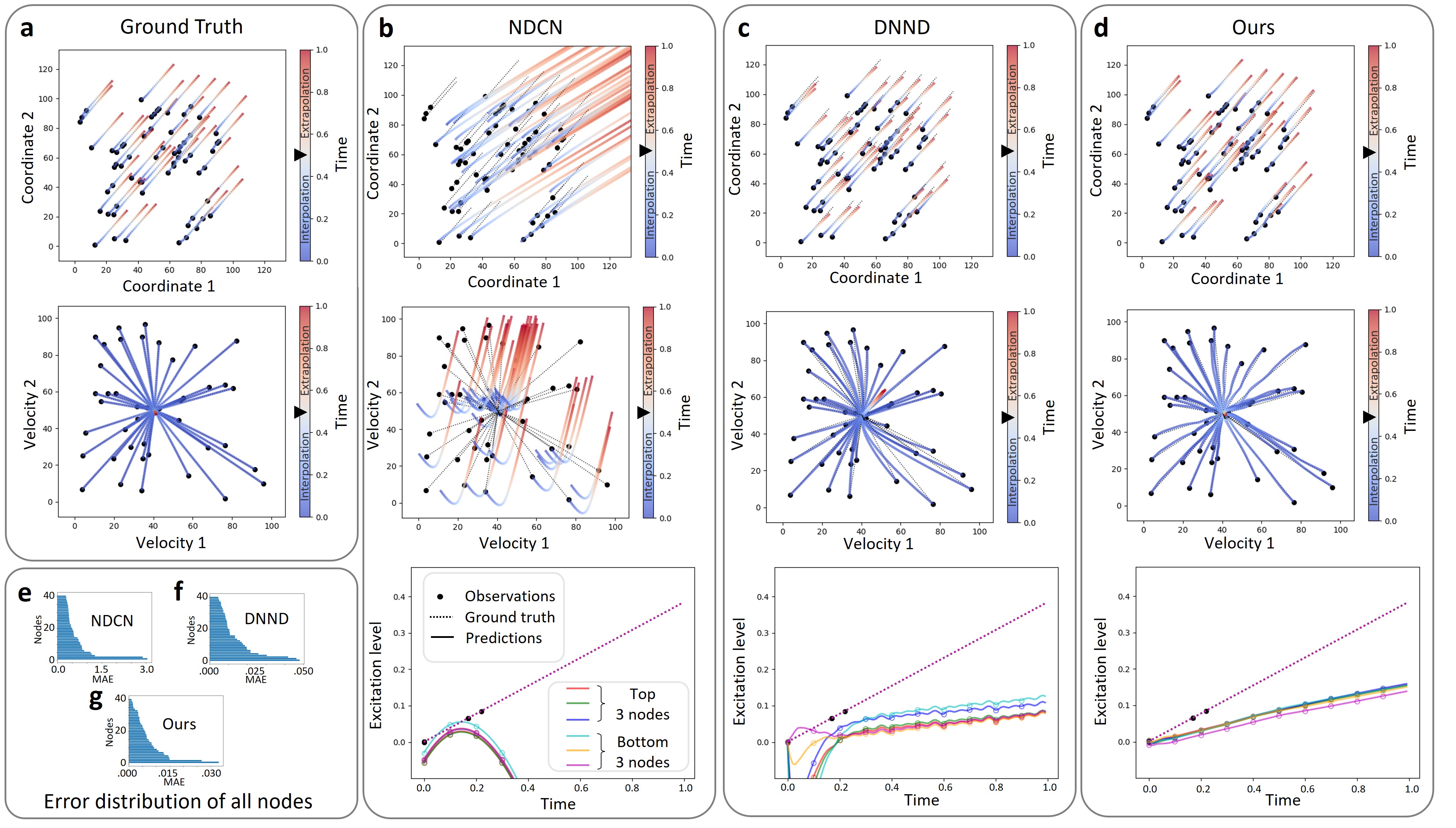}
\caption{
The interpolation and extrapolation results on \textit{phototaxis dynamics}.
\textbf{a-d}, The ground truth and testing results of the NDCN, DNND, and our NDP4ND.
The ratio of observations in this testing network ODE is $3.23\%$.
The number of nodes and the maximum value of all observed times ($T_o$) are 40 and 0.5, respectively.
The predictive five states, including coordinate 1, coordinate 2, velocity 1, velocity 2 and excitation level, are reported.
The NDCN has not learned effective network dynamics from a few observations.
The DNND fits well at the observed locations, but its velocity eventually converges to the wrong solution.
Our NDP4ND ultimately stabilizes at the correct velocity and the learned dynamics are relatively stable and close to the ground truth.
\textbf{e-g}, The distributions of Mean Absolute Error (MAE) between predictions and ground truth for all nodes, demonstrating the high-precision predictions from our method.
}\label{fig_phototaxis}
\end{figure}

\textbf{Second-order phototaxis dynamics.}
Phototaxis dynamics is a second-order system simulating the dynamics of phototactic bacteria towards a fixed light source \citep{phototaxis}, which can be applied to movement of bacteria towards food sources \citep{doi:10.1073/pnas.1822012116} and emergency evacuation modeling \citep{emergency_airplane_evacuations}.
The system performs bacteria's movements through the excitation level of bacteria to the light and their interactions with each other \citep{doi:10.1073/pnas.1822012116}.
The complete graph is used to describe all interactions between any two bacteria.
The average ratio of observations in the testing network ODEs is $4.65\%$.
We do not compare the LG-ODE due to its poor performance.
We see that the our NDP4ND has the optimal results for both interpolation and extrapolation tasks, which means it can learn effective network dynamics (Extended Data Table~\ref{table_phototaxis}).
The predictive five states on an instance, including coordinate 1, coordinate 2, velocity 1, velocity 2 and excitation level, are reported in Fig.~\ref{fig_phototaxis}.
The NDCN cannot learn effective network dynamics from a few observations (Fig.~\ref{fig_phototaxis}b).
The DNND fits well at the observed locations, but its velocity eventually converges to the wrong solution (Fig.~\ref{fig_phototaxis}c).
Our NDP4ND ultimately stabilizes at the correct velocity, can learn relatively stable and accurate network dynamics (Fig.~\ref{fig_phototaxis}d), and has lowest predictive errors (Fig.~\ref{fig_phototaxis}e-g).

\begin{figure}[t]%
\centering
\includegraphics[width=1.\textwidth]{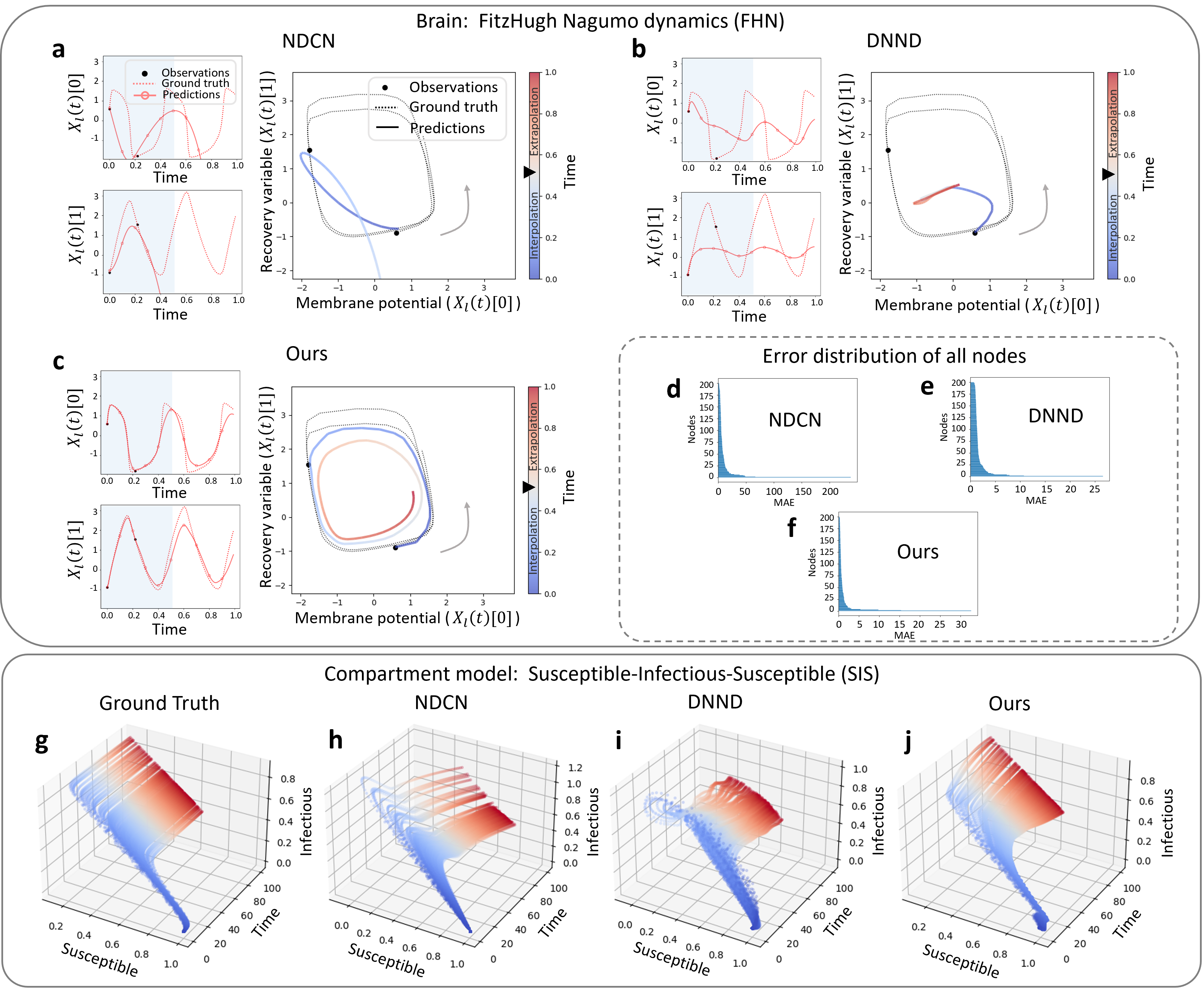}
\caption{
The interpolation and extrapolation results on \textit{brain dynamics} and \textit{susceptible-infectious-susceptible (SIS)}.
\textbf{a-c}, The testing results of the NDCN, DNND, and our NDP4ND on brain dynamics.
The ratio of observations in this testing network ODE is $3.23\%$.
The number of nodes and the maximum value of all observed times ($T_o$) are 200 and 0.5, respectively.
Two-dimensional states, i.e., membrane potential ($X_l(t)[0]$) and recovery variable ($X_l(t)[1]$), are reported.
Our NDP4ND can learn effective dynamics that reveals the expected periodicity in both interpolation and extrapolation regions. 
\textbf{d-f}, The distributions of Mean Absolute Error (MAE) between predictions and ground truth for all nodes on brain dynamics, demonstrating the high-precision predictions from our method.
\textbf{g-j}, The ground truth and testing results of the NDCN, DNND, and our NDP4ND on SIS dynamics.
The number of nodes and the maximum value of all observed times ($T_o$) are 200 and 50, respectively.
The time series predicted by our method are closest to the ground truth and meet the constraint of Susceptible+Infectious=1.
}\label{fig_brain_sis}
\end{figure}

\textbf{Brain dynamics.}
We also apply our method to brain activities governed by the FitzHugh-Nagumo dynamics \citep{FHN2020}, which has a quasi-periodic characteristic.
Topological structures with power-law distribution are used to simulate the interactions among brain functional areas \citep{Gao2022}.
The average ratio of observations in the testing network ODEs is $5.07\%$.
We see that our NDP4ND has lowest predictive error and highest similarity (Extended Data Table~\ref{table_brain}).
From a specific testing ODE instance, we see that the NDCN fits well at the observed locations, but cannot learn the inherent periodic behavior from a few observations (Fig.~\ref{fig_brain_sis}a) and the DNND still underestimates the task (Fig.~\ref{fig_brain_sis}b).
As a comparison, the predictive results produced by our NDP4ND exhibit periodic behavior, being a support of learning effective network dynamics (Fig.~\ref{fig_brain_sis}c).
The lowest MAE error also confirms the effectiveness of our method's prediction (Fig.~\ref{fig_brain_sis}d-f).

\textbf{Compartment models in epidemiology.}
Compartment models \citep{doi:10.1287/educ.1100.0075,individualSIR} are often used to simulate the spread of epidemics on networks. 
Herein, we conduct experiments on three commonly used compartment models: susceptible-infectious-susceptible (SIS), susceptible-infectious-recovered (SIR), and susceptible-exposed-infectious-susceptible (SEIS).
The observed ratio of testing tasks is around 6.0\%.
Although our method is slightly inferior to the NDCN in the interpolation region in some scenarios, the results in the extrapolation region are far superior to the compared methods (Extended Data Table~\ref{table_compartment}), showing its power to extrapolate and cover various epidemic scenarios.
An instance of the SIS dynamics is shown in Fig.~\ref{fig_brain_sis}g.
Both the NDCN and DNND cannot produce the effective predictions (Fig.~\ref{fig_brain_sis}h).
Especially, the results generated by the DNND does not meet the constraint of $\text{Susceptible}+\text{Infectioous}=1$ (Fig.~\ref{fig_brain_sis}i).
As a comparison, the shape of the time series predicted by our method is closest to the ground truth and the results are on the plane of $\text{Susceptible}+\text{Infectioous}=1$ (Fig.~\ref{fig_brain_sis}j).
More instances can be found in Extended Data Fig.~\ref{fig_sir_seis}.

\begin{figure}[t]%
\centering
\includegraphics[width=.7\textwidth]{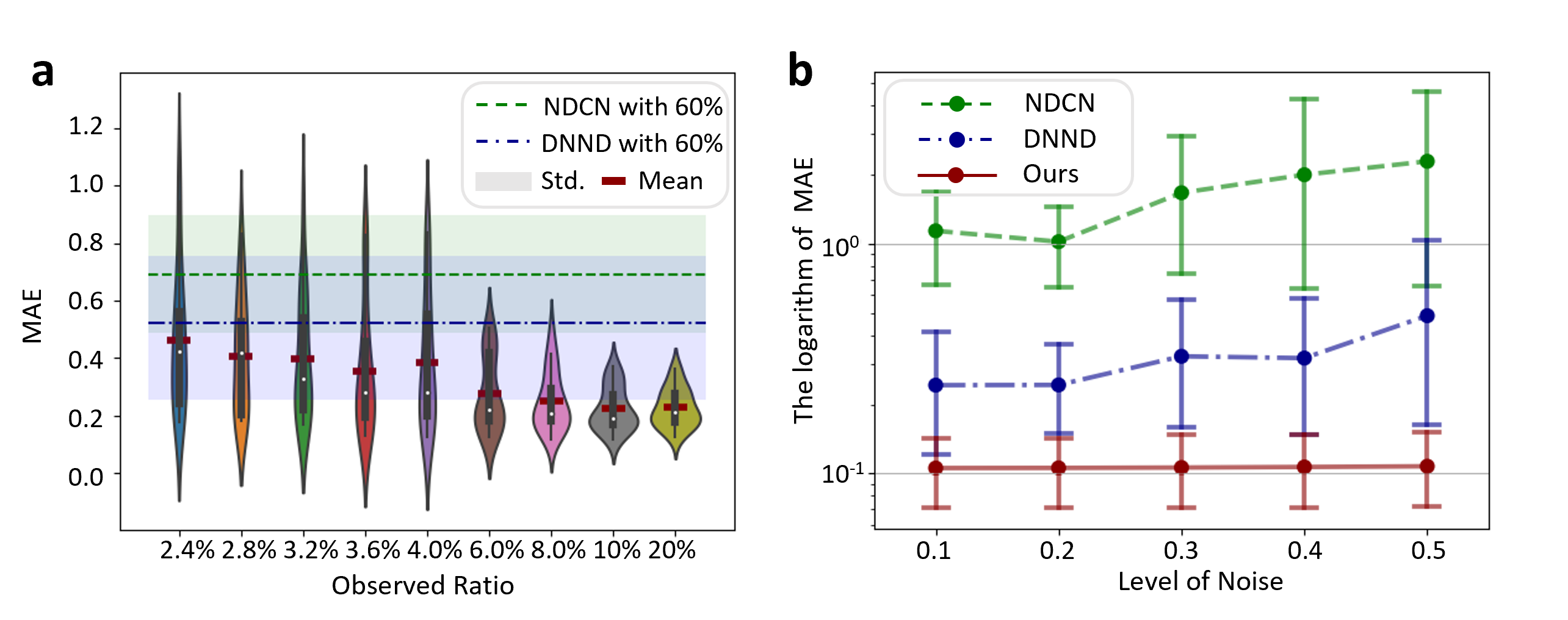}
\caption{
\textbf{a}, The testing results on mutualistic interaction dynamics at various observed ratios.
As the observation density increases, the predictive error of the NDP4ND appears a decreasing trend, where the prediction error occurs a cliff like decline when the observed ratio is around 6\%, reaching around 0.3.
By contrast, the prediction errors of the LG-ODE, NDCN, and DNND at an observed ratio of 60\% are 3.99$\pm$3.59, 0.70$\pm$0.20, and 0.52$\pm$0.24, respectively.
\textbf{b}, The testing results on mutualistic interaction dynamics at various noise levels.
As the noise level increases, the predictive errors shows an increasing trend, demonstrating that the introduction of noise indeed weakens predictions.
Benefiting from uncertainty modeling, our NDP4ND is relatively stable and has significantly outperformed others.
}\label{fig_sparse_noise}
\end{figure}

\textbf{Ablation studies.}
By comparing existing neural process models \citep{DBLP:journals/corr/abs-1807-01622,DBLP:conf/iclr/NorcliffeBDML21} and variations of our method, the results show that our proposed method and its variants outperform the neural process models (Extended Data Table~\ref{table_mutu_NP}), demonstrating that modeling the interactions of temporal dynamics is necessary.
Moreover, the NDP4ND achieves better results than its variants, which confirms the effectiveness of our computational architecture design and explicitly modeling the ODE flow describing network dynamics.

\begin{figure}[t]%
\centering
\includegraphics[width=1.\textwidth]{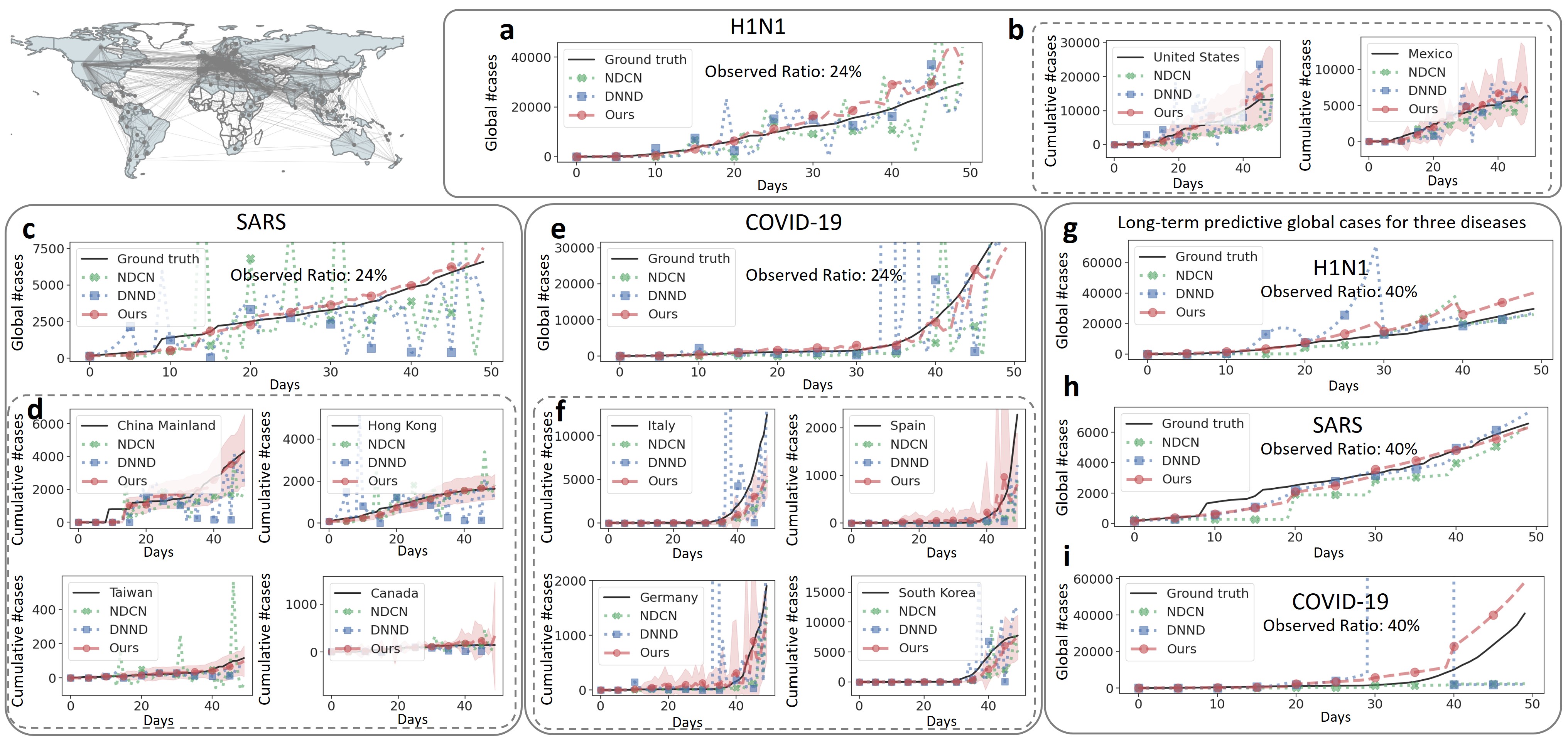}
\caption{
The testing results on real-world global epidemic transmission.
Note that, testing on H1N1 can be seen as predicting re-emerging infectious diseases, but testing on SARS and COVID-19 as predicting emerging infectious diseases.
\textbf{a-f}, The short-term forecasting results of the methods for three different diseases.
From the global case number (\textbf{a,c,e}), our NDP4ND has smoother and better results than others. 
The predictions on the case number in major countries or regions (\textbf{b,d,f}) also supports this performance.
\textbf{g-i}, The long-term forecasting results of the methods for three different diseases.
our NDP4ND achieves relatively better results, especially for emerging infectious disease COVID-19.
Other methods either fail to predict obvious outbreak trends (NDCN) or predict terrifying drastic erroneous fluctuations (DNND at 30-40 days) from sparse observations, while our method is closer to real epidemic outbreak data and has similar trends.
}\label{fig_real}
\end{figure}

\subsection{Learning network dynamics from sparse and noisy observations}

We also test our method's tolerance for the observed ratio.
By predicting system behaviors for 20 new sampled network ODE instances per observed ratio on mutualistic interaction dynamics, Fig.~\ref{fig_sparse_noise}a shows that as the observation density increases, the predictive error of the NDP4ND appears a decreasing trend, 
where the prediction error occurs a cliff like decline when the observed ratio is around 6\%, reaching around 0.3.
By contrast, the prediction errors of the LG-ODE, NDCN, and DNND at an observed ratio of 60\% are 3.99$\pm$3.59, 0.70$\pm$0.20, and 0.52$\pm$0.24, respectively.
This indicates that our NDP4ND's pretraining on a large amount of relevant data is helpful in significantly reducing the required observations when learning new network dynamics.
To test the robustness of our NDP4ND for various noise levels, we add white Gaussian disturbances to observations.
Our method is tested on sampled 20 new network ODE instances per level.
Fig.~\ref{fig_sparse_noise}b shows that noise indeed weakens predictions.
Benefiting from uncertainty modeling, our NDP4ND is relatively stable and has significantly outperformed the NDCN and DNND.

\subsection{Learning from empirical systems}
As impressive network dynamics, emerging and re-emerging infectious diseases are significant burdens on global economies and public health \citep{WHO_estimates,Global_trends_emerging_infectious_diseases}.
We collect daily global spreading data on three real diseases, including H1N1 \citep{dataweb}, SARS \citep{dataweb}, and COVID-19 \citep{covid19data}, and the worldwide airline network retrieved from OpenFights as a directed and weighted empirical topology to build the empirical systems of real-world global epidemic transmission.
Only early data before government intervention, i.e., the first 50 days, is considered here to keep the spread features of the disease itself.
We allow H1N1 data as training set according to various time intervals and test on all three diseases.
As a result, testing on H1N1 can be seen as predicting re-emerging infectious diseases, as the method trained on the same data but testing on SARS and COVID-19 as predicting emerging infectious diseases.
For short-term forecasting, models use partial observations from the previous 5 days to predict the case number on the next day (Fig.~\ref{fig_real}a-f).
Long-term forecasting requires those from the previous 10 days to predict the next 10 days (Fig.~\ref{fig_real}g-i).
We see that our NDP4ND achieves smoother and better results than the NDCN and DNND in both short-term and long-term predictions.
This demonstrates its expressive power for empirical data and its potential for processing real-world emerging and re-emerging tasks.

\section{Discussion}\label{sec3}

We propose the NDP4ND, a new class of stochastic processes governed by stochastic data-adaptive network dynamics, to represent the trajectory function space generated by network dynamics to provide a reasonable solution for learning network emerging dynamics from sparse, irregularly-sampled, partial, and noisy data.
Intensive experiments demonstrate the effectiveness of our method in adapting to many simulated and real-world tasks.
However, the accuracy of the current method is still limited by the approximate numerical integration in the ODE flow. 
Therefore, how to alleviate error accumulation is an outstanding problem, especially for the neural-ODE-based methods \citep{NEURIPS2018_69386f6b}.
It may be a promising attempt to map raw dynamics to linear ODE through an invertible neural network \citep{DBLP:conf/icml/ZhiLOBR22}, and then speed up the integration process or directly calculating the closed-form solution of latent linear ODE \citep{DBLP:journals/natmi/HasaniLALRTTR22}.

Real-world complex networked systems are ubiquitous, ranging from celestial movement to epidemic spreading.
Such a large number of scenarios have their own unique characteristics, such as the periodicity of brain activity data \citep{FHN2020} and the equilibrium of thermal dynamics \citep{Luikov2012}.
Therefore, how to beyond our method and to integratively and systematically represent the trajectory function space produced by abundant network dynamics scenarios and build a pretrained network dynamics model with richer expressive power for more diverse downstream tasks still remains an open question, such as heterogeneous \citep{Controlling2023}, high-order \citep{bianconi_2021}, and controllable network dynamics \citep{Controlling2021}.

\section{Methods}\label{sec4}

\subsection{Preliminaries}

\textbf{Universal skeleton of network dynamics.}
With appropriate choices of the functions $f$ and $g$, the following skeleton of the dynamical equation can describe a broad range of complex networked systems \citep{2013Universality}, 
$
\dot{X}_l(t)=f(X_l(t))+\sum_{j}{A_{l,j}g(X_l(t),X_j(t))}
$,
where, $f$ is self dynamics and $g$ is interaction dynamics.
The former quantifies the impact of one's own state on change, while the latter quantifies the impact of neighboring states on change (Extended Data Fig.~\ref{fig_skeleton}a).
Note that the distinguishing feature of network dynamics is that the updating of states on the networked system is influenced by adjacent nodes.
The skeleton has been successfully applied to guide the reconstruction of governing equations of network dynamics \citep{Gao2022}.

\textbf{Neural ODE}.
Neural ODEs refer to a technique that parameterizes state differentiation via neural networks \citep{NEURIPS2018_69386f6b},
i.e.,
$
\dot{X}(t)={NN}(X(t),t)
$, and uses the adjoint method \citep{NEURIPS2018_69386f6b} to solve the initial-value problem (IVP) to ensure that the model can effectively perform backpropagation to learn parameters of neural networks.
Latent neural ODEs usually encode the original state into the hidden space, parameterize the differentiation of the hidden state, and then obtain the future state by performing the integration and decoding operations, i.e.,
$
L(0)=enc(X(0))
$,
$
L(t')=L(0)+\int_{t_0}^{t'}{NN(L(t),t)}dt
$,
$
X(t')=dec(L(t'))
$,
where, $X(0)$ is a shorthand for the initial state $X(t_0)$, $L(0)$ is the initial hidden state, and $t'$ is the target time.
Note that $enc$ and $dec$ can also be neural networks.
It has been observed that the way to learn dynamics after mapping into hidden space has excellent performance \citep{DBLP:conf/iclr/NorcliffeBDML21,NDCN} and can express more complex original dynamics \citep{10.5555/3495724.3496220}. 
In network dynamics, effective network encoding can reduce complex spatio-temporal patterns to simple \citep{DBLP:journals/natmi/FloryanG22} and homogeneous dynamic patterns \citep{doi:10.1126/science.1245200}, e.g., the global spread of epidemics and opinions.


\textbf{Neural processes}.
Neural Processes (NPs) are neural network-parameterized stochastic processes \citep{DBLP:journals/corr/abs-1807-01622}, which can model a random function $F: \mathcal{T} \rightarrow \mathcal{X} \subseteq\mathbb{R}^{d}$, i.e., 
$F(.)=NN(.,z)$.
The neural network ($NN$) determines the internal statistical behavior of the stochastic process and the global latent random vector $z$ controls the uncertainty, i.e. generating different functions.
Note that while the input and output domains of the random function $F$ could be arbitrary, we are more interested in time series on networks, hereby using time as the input and state as the output.
Given the context set $\mathcal{D}_{\mathbb{C}}=\{(t_{1:N_{\mathbb{C}}}^{\mathbb{C}},X(t_{1:N_{\mathbb{C}}}^{\mathbb{C}}))\}$ and target set $t_{1:N_{\mathbb{T}}}^{\mathbb{T}}, X(t_{1:N_{\mathbb{T}}}^{\mathbb{T}})$, the generative process is
$
p(X(t_{1:N_{\mathbb{T}}}^{\mathbb{T}}),z\vert t_{1:N_{\mathbb{T}}}^{\mathbb{T}},\mathcal{D}_{\mathbb{C}})=p(z\vert \mathcal{D}_{\mathbb{C}})\prod_{i=1}^{N_{\mathbb{T}}}\mathcal{N}(X(t_{i}^{\mathbb{T}})\vert NN(t_i^{\mathbb{T}},z),\sigma^2)
$.
Due to the strong data adaptability and computational efficiency, its advanced versions have been developed, incorporating application-specific inductive biases of attention mechanism \citep{DBLP:conf/iclr/KimMSGERVT19} and translation equivariance \citep{NEURIPS2020_5df0385c}.
A small number of studies used graphs to model dependencies among inputs \citep{DBLP:journals/corr/abs-1812-05212} and incorporated GCNs into the NP architecture to improve the performance of traditional graph learning tasks, including node classification \citep{cangea2022message}, edge imputation \citep{DBLP:journals/corr/abs-1902-10042}, and link predictions \citep{9746010}.
But they cannot model and deal with temporal dynamics.
Although a class of neural ODE processes (NDPs) \citep{DBLP:conf/iclr/NorcliffeBDML21} generalized NPs defined over time by combining latent neural ODEs with NPs, it only aims at temporal dynamics and ignores spatial interactions between temporal dynamics on networks, resulting in insufficient behavior tracking for network dynamics learning.

\subsection{Neural ODE processes for network dynamics}
We introduce Neural ODE Processes for Network Dynamics (NDP4ND), a new family of stochastic processes parameterized by neural networks, simultaneously considering time dynamics and node interactions on topological structures, which learn to approximate random spatio-temporal trajectory functions on networks.
\textit{We remark that the NDP4ND can be seen as an extension of NDPs \citep{DBLP:conf/iclr/NorcliffeBDML21} for network dynamics, taking into account interactions of time dynamics on networks.}

We establish a general stochastic process ($\mathcal{F}$) to model the complex trajectory function space, while considering the correlation between the states of adjacent nodes.
Specifically, we model a collection $\{X_l(t)\}$ as a stochastic process, where the node state $X_l(t)\in\mathbb{R}^d$ is a random variable (Extended Data Fig.~\ref{fig_skeleton}b).
In principle, as time and space (network) expand, there are infinitely many such random variables in the networked system.
To build the stochastic processes, the critical issue we face is how to construct a family of finite-dimensional distributions to completely characterize the statistical relationship among any finite set of these random variables.

\textbf{Family of finite-dimensional distributions}.
Since the NDP4ND should model the dynamic changes of node states on the networked systems, we properly combine the universal skeleton of network dynamics \citep{2013Universality}, latent neural ODE \citep{NEURIPS2018_69386f6b}, and neural processes \citep{DBLP:journals/corr/abs-1807-01622} to build its family of distributions suitable for network dynamics modeling, retaining advantages including broad network interactions, temporal changes, and data-adaptation.
Extended Data Fig.~\ref{fig_skeleton}c shows the probabilistic graphical model of the generative process behind the NDP4ND.

Given a network $A$ and a context set  $\mathcal{D}_{\mathbb{C}}$,
the generative model can be formulated as
\begin{equation}
 p_{X,z,L(0)}=
   {p(z\vert \mathcal{D}_{\mathbb{C}},A)}
    {p(L(0)\vert \mathcal{D}_{\mathbb{C}})}
    \prod_{i=1}^{N_{\mathbb{T}}}{p(X_{l^{\mathbb{T}}_{i}}(t^{\mathbb{T}}_{i})\vert t^{\mathbb{T}}_i,l^{\mathbb{T}}_{i},z,L(0),A)},
\label{eq1}
\end{equation}
where $p_{X,z,L(0)}$ is a shorthand for the generative model $
 p(X_{l^{\mathbb{T}}_{1:N_{\mathbb{T}}}}(t^{\mathbb{T}}_{1:N_{\mathbb{T}}}),z,L(0)\vert t^{\mathbb{T}}_{1:N_{\mathbb{T}}},l^{\mathbb{T}}_{1:N_{\mathbb{T}}},\mathcal{D}_{\mathbb{C}},A)
$,
$t^{\mathbb{T}}_{1:N_{\mathbb{T}}}$, $l^{\mathbb{T}}_{1:N_{\mathbb{T}}}$, and $X_{l^{\mathbb{T}}_{1:N_{\mathbb{T}}}}(t^{\mathbb{T}}_{1:N_{\mathbb{T}}})$ are the target set, $L(0)$ and $z$ denote the initial states in latent space and the global random vector that can control the network ODE, respectively.

\textbf{Modeling distributions ${p(z\vert \mathcal{D}_{\mathbb{C}},A)}$ and ${p(L(0)\vert \mathcal{D}_{\mathbb{C}})}$}.
We encode network topology and context set into two latent variables, i.e. $z\sim q(z\vert \mathcal{D}_{\mathbb{C}},A)$ and $L(0)\sim q(L(0)\vert \mathcal{D}_{\mathbb{C}})$.
To parameterize the distribution of $z$, we first integrate observations with the same time in the contexts and combine them with the topology to form a graph, i.e. $G(t)=(X(t), M(t), A)$, where $X(t)\in\mathbb{R}^{n\times d}$ stores the all observed states at time $t$ and the mask $M(t)$ has the same shape as $X(t)$, indicating which nodes have been observed (its value on the corresponding position is set to 1, otherwise it is 0).
The context set $\mathcal{D}_{\mathbb{C}}$ is then transformed into a graph set $\{(t_1,G(t_1)),...,(t_K,G(t_K))\}$, where $K$ is consistent with the number of $t$ in contexts that eliminate duplicates.
We use a neural network $\bm{\varphi}$ to produce a representation for each pair in the graph set, i.e., $r_k = \bm{\varphi}(t_k,G(t_k))$, and then use another neural network $\bm{\rho}$ to give the distribution of $z$ by $q(z\vert \mathcal{D}_{\mathbb{C}},A)=\mathcal{N}(\mu_z,diag(\sigma_z^2))$, where $[\mu_z;\sigma_z^2]=\bm{\rho}(r)$, $r=agg(\{r_k\}_{k=1}^{K})$, and $agg$ is an aggregation operation with permutation invariance, such as element-wise mean.
Although the distribution of $L(0)$ can also be obtained through the above fashion, we handle it in a simpler way when the initial states are always known, i.e., $q(L(0)\vert \mathcal{D}_{\mathbb{C}})=q(L(0)\vert X(0))=\prod_{l=1}^{n}
\mathcal{N}(\mu_{L_l(0)},diag(\sigma_{L_l(0)}^2))$, where $[\mu_{L_l(0)};\sigma_{L_l(0)}^2]=\bm{e}(X_l(0))$ and function $\bm{e}$ is as a neural network for encoding each initial state at any node $l$.

\textbf{Modeling distribution ${p(X_{l^{\mathbb{T}}_{i}}(t^{\mathbb{T}}_{i})\vert t^{\mathbb{T}}_i,l^{\mathbb{T}}_{i},z,L(0),A)}$}.
After obtaining the distributions of $z$ and $L(0)$, we perform dynamic propagation in the hidden space with initial hidden state $L(0)$ based on the universal skeleton of network dynamics as follows
\begin{equation}
    L_{l^{\mathbb{T}}_{i}}(t^{\mathbb{T}}_i)=L_{l^{\mathbb{T}}_{i}}(0)+\int_{t_0}^{t^{\mathbb{T}}_i}{\bigg( \bm{S}(L_{l^{\mathbb{T}}_i}(t),\tilde{z})+\sum_{j=1}^{n}A_{l^{\mathbb{T}}_i,l^{\mathbb{T}}_j}\bm{I}(L_{l^{\mathbb{T}}_i}(t),L_{l^{\mathbb{T}}_j}(t),\tilde{z})}\bigg)dt,
\end{equation}
where $\bm{S}$ and $\bm{I}$ are neural networks, representing self dynamics and interaction dynamics, respectively.
Note that the dynamic process is mainly controlled by $\tilde{z}=[z;r]$, where $r$ transmits the context information to the propagation process from a deterministic path, and random variables $z$ and $L(0)$ jointly characterize the uncertainty of underlying network dynamics.
Given $\tilde{z}$ and $L_{l^{\mathbb{T}}_{i}}(0)$, the evolved latent state $L_{l^{\mathbb{T}}_{i}}(t^{\mathbb{T}}_i)$ can be seen as a deterministic function dominated by $t^{\mathbb{T}}_i$ and $l^{\mathbb{T}}_{i}$, involving the interaction between nodes.
Assuming that output states are noisy, for a given $L_{l^{\mathbb{T}}_{i}}(t^{\mathbb{T}}_i)$, we can decode it into the predictive state by
$X_{l^{\mathbb{T}}_{i}}(t^{\mathbb{T}}_i)\sim p(X_{l^{\mathbb{T}}_{i}}(t^{\mathbb{T}}_i)|t^{\mathbb{T}}_i,l^{\mathbb{T}}_{i},z,L(0),A)=\mathcal{N}(\mu_{{X}_{l^{\mathbb{T}}_i}}(t^{\mathbb{T}}_i),\sigma^2_{{X}_{l^{\mathbb{T}}_i}}(t^{\mathbb{T}}_i))$,
where $[\mu_{{X}_{l^{\mathbb{T}}_i}}(t^{\mathbb{T}}_i); \sigma^2_{{X}_{l^{\mathbb{T}}_i}}(t^{\mathbb{T}}_i)]=\bm{d}(t^{\mathbb{T}}_i,L_{l^{\mathbb{T}}_{i}}(t^{\mathbb{T}}_i),\tilde{z})$ 
and $\bm{d}$ is a neural network.

\textbf{Theoretical existence.}
We state that the constructed family of finite-dimensional distributions satisfies the exchangeability and consistency conditions (Proposition 1 in the Section A.2 of the supplementary information).
The Kolmogorov Extension Theorem guarantees that these conditions are sufficient to define a stochastic process \citep{oksendal2003}. 
In other words, the stochastic process we established exist, and the family of distributions is just its.
The specific architectures of the neural networks in the NPD4NP ($\bm{\varphi}$, $\bm{\rho}$, $\bm{e}$, $\bm{S}$, $\bm{I}$, and $\bm{d}$) can be found in the Section A.1.1 of the supplementary information.

\textbf{Training.}
To learn the distribution over trajectory functions, we train the model on a set of state observations generated from $B$ different network dynamics with $A^{(1)},...,A^{(B)}$ topological structures.
Also, we split the observations into $B$ context sets $\mathcal{D}^{1:B}_{\mathbb{C}}$ and $B$ target sets $\mathcal{D}^{1:B}_{\mathbb{T}}$. 
$\mathcal{D}^{b}_{\mathbb{C}}$ is usually a subset of $\mathcal{D}^{b}_{\mathbb{T}}$.
Since the generative process (Eq.~\ref{eq1}) of the NDP4ND contains multiple highly non-linear neural networks, the true posterior is intractable. 
We, thus, follow \citep{pmlr-v80-garnelo18a,DBLP:journals/corr/abs-1807-01622,DBLP:conf/iclr/NorcliffeBDML21} and use an amortized variational inference to train the parameters by reparametrization trick and stochastic gradient descent.
The derived total training loss under all observations is as follows
\begin{equation}
\small
        \mathcal{L}=\frac{1}{B}{\sum_{b=1}^{B}{\bigg[{\mathbb{E}_{q_{z,L(0)}}\bigg(\sum_{i=1}^{N^{b}_{\mathbb{T}}}{-\log{p(X_{l_{i}^{\mathbb{T}_{b}}}(t_{i}^{\mathbb{T}_{b}})\vert t_{i}^{\mathbb{T}_{b}},l_{i}^{\mathbb{T}_{b}},z,L(0),A^{(b)})}}\bigg)+\beta{KL}(q(z|\mathcal{D}^{b}_{\mathbb{T}})\Vert q(z|\mathcal{D}^{b}_{\mathbb{C}}))}\bigg]}},
\label{eq_loss}
\end{equation}
where $q_{z,L(0)}=q(z|\mathcal{D}^{b}_{\mathbb{T}},A^{(b)})q(L(0)|\mathcal{D}^{b}_{\mathbb{T}})$ and $\beta$ is a tunable parameter to relieve the KL vanishing.

\textbf{Prediction.}
Given the learned NDP4ND, a network topology $A$, and a set of state observations $\mathcal{D}_{\mathbb{C}}
$ from the task at hand,
we obtain the predictive distribution of the states on any node $l^{\mathbb{T}}$ at any time $t^{\mathbb{T}}$ as
\begin{equation}
    p(X_{l^{\mathbb{T}}}(t^{\mathbb{T}})\vert t^{\mathbb{T}},l^{\mathbb{T}},\mathcal{D}_{\mathbb{C}},A)=\int
q(z|\mathcal{D}_{\mathbb{C}},A)q(L(0)|\mathcal{D}_{\mathbb{C}})p(X_{l^{\mathbb{T}}}(t^{\mathbb{T}})\vert t^{\mathbb{T}},l^{\mathbb{T}},z,L(0),A)dzdL(0).
\label{eq_pred}
\end{equation}
We can use the Monte Carlo method to approximate the integral and moment matching to construct a Gaussian posterior approximation for the distribution as
$
p(X_{l^{\mathbb{T}}}(t^{\mathbb{T}})\vert t^{\mathbb{T}},l^{\mathbb{T}},\mathcal{D}_{\mathbb{C}},A)\approx \mathcal{N}(\mathcal{Z}_1,\mathcal{Z}_2)
$,
where 
$\mathcal{Z}_1=\frac{1}{J}\sum_{j=1}^{J}{\mu^{(j)}_{X_{l^{\mathbb{T}}}}(t^{\mathbb{T}})}$,
$\mathcal{Z}_2=[\frac{1}{J}\sum_{j=1}^{J}{(\sigma^{(j)}_{X_{l^{\mathbb{T}}}}(t^{\mathbb{T}}))^2+(\mu^{(j)}_{X_{l^{\mathbb{T}}}}(t^{\mathbb{T}}))^2}]-[\frac{1}{J}\sum_{j=1}^{J}{\mu^{(j)}_{X_{l^{\mathbb{T}}}}(t^{\mathbb{T}})}]^2$,
$J$ is the sampling number for $z$ and $L(0)$,
and $\mu^{(j)}_{X_{l^{\mathbb{T}}}}(t^{\mathbb{T}})$ and $\sigma^{(j)}_{X_{l^{\mathbb{T}}}}(t^{\mathbb{T}})$ are outputs of decoder network $\bm{d}$.
The full derivation of the training loss and predictions can be found in Sections A.1.2 and A.1.3 of the supplementary information.

\backmatter

\bmhead{Acknowledgments}
This work was supported by the National Key R\&D Program of China under Grant Nos. 2021ZD0112501 and 2021ZD0112502;  the National Natural Science Foundation of China under Grant Nos. U22A2098, 62172185, 62206105 and 62202200.

\bmhead{Author contributions}
J.C. and B.Y. conceived of the presented idea and designed the experiments.
J.C. and B.S. carried out the experiments.
B.S. and J.L. analyzed the data.
J.C. wrote the paper.

\bmhead{Competing interests}
The authors declare no competing interests

\bmhead{Supplementary information}
The online version contains supplementary information available at \url{xxx}.

\bmhead{Data and code availability}
The data and source code are freely available at GitHub (\url{https://github.com/csjtx1021/neural_ode_processes_for_network_dynamics-master}) to ensure the reproduction of our experimental results.

\bibliography{ms}

\clearpage

\renewcommand{\figurename}{Extended Data Fig.}
\setcounter{figure}{0}

\begin{figure}[htb]%
\centering
\includegraphics[width=1.\textwidth]{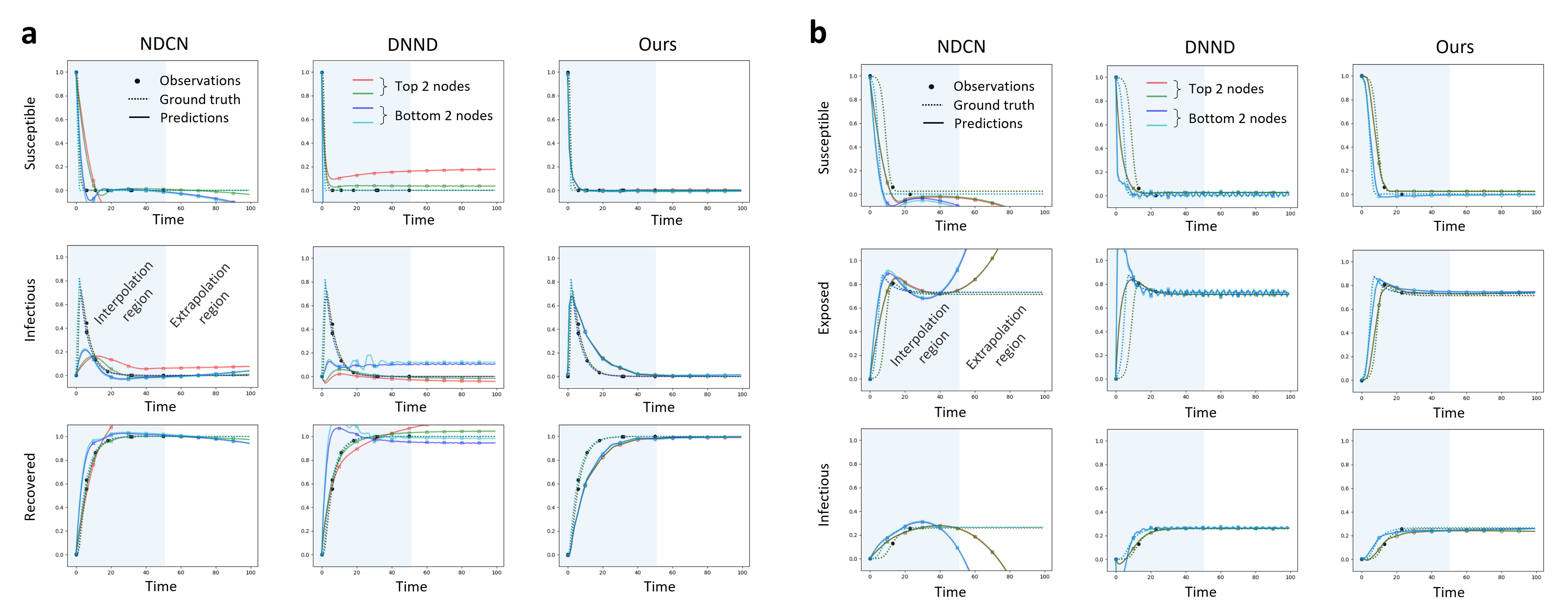}
\caption{
The interpolation and extrapolation results on \textit{susceptible-infectious-recovered (SIR)} and \textit{susceptible-exposed-infectious-susceptible (SEIS)}.
\textbf{a}, The testing results of the NDCN, DNND, and our NDP4ND on the SIR.
The ratio of observations in this testing network ODE is $6.89\%$.
\textbf{b}, The testing results of the NDCN, DNND, and our NDP4ND on the SEIS.
The ratio of observations in this testing network ODE is $4.08\%$.
The number of nodes and the maximum value of all observed times ($T_o$) are 200 and 50, respectively.
Overall, our NDP4ND can learn the most effective network dynamics from irregularly-sampled, partial, and sparse observations.
}\label{fig_sir_seis}
\end{figure}

\renewcommand{\tablename}{Extended Data Table}
\setcounter{table}{0}

\begin{table}[htb]
\centering
\caption{The interpolation and extrapolation results across all testing network ODEs on \textit{Mutualistic Interaction} dynamics in terms of quantitative predictive error and similarity.
The reported values are the \textit{mean} (\textit{standard deviation}) of the Mean Absolute Error (MAE) and Dynamic Time Warping (DTW) between prediction and ground truth.
The best results are bolded.
}
\label{table_mutu}
\begin{tabular}{ccrr}
\toprule
Methods                       & \multicolumn{1}{c}{Metrics} & \multicolumn{1}{c}{Interpolation ($T_o\leq50$)} & \multicolumn{1}{c}{Extrapolation ($T_o>50$)} \\ \midrule
\multirow{2}{*}{LG-ODE}       & MAE                  & 4.49E+02 (2.24E+03)                         & 1.33E+03 (6.66E+03)                         \\
                              & DTW                  & 2.24E+04 (1.12E+05)                          & 6.67E+04 (3.33E+05)            \\\midrule
\multirow{2}{*}{NDCN}         & MAE                  & 5.08E+00 (1.34E+01)                         & 1.98E+02 (1.50E+03)                         \\
                              & DTW                  & 2.34E+02 (6.69E+02)                     & 9.89E+03 (7.50E+04)                         \\\midrule
\multirow{2}{*}{DNND}         & MAE                  & 7.40E-01 (8.96E-01)                         & 1.57E+03 (1.57E+04)                         \\
                              & DTW                  & 2.69E+01 (4.59E+01)                         & 7.87E+04 (7.83E+05)                    \\ \midrule\midrule
\multirow{2}{*}{NDP4ND(Ours)} & MAE                  & \textbf{3.62E-01 (2.16E-01)}                & \textbf{3.67E-01 (2.73E-01)}                     \\
                              & DTW                  & \textbf{1.43E+01 (1.01E+01)}                & \textbf{1.82E+01 (1.37E+01)}              \\ 
\bottomrule
\end{tabular}
\end{table}

\begin{table}[htb]
\centering
\caption{The interpolation and extrapolation results across all testing network ODEs on \textit{Second-order Phototaxis} dynamics in terms of quantitative predictive error and similarity.
The reported values are the \textit{mean} (\textit{standard deviation}) of the Mean Absolute Error (MAE) and Dynamic Time Warping (DTW) between prediction and ground truth.
The best results are bolded.
}
\label{table_phototaxis}
\begin{tabular}{ccrr}
\toprule
Methods                       & \multicolumn{1}{c}{Metrics} & \multicolumn{1}{c}{Interpolation ($T_o\leq0.5$)} & \multicolumn{1}{c}{Extrapolation ($T_o>0.5$)} \\ \midrule
\multirow{2}{*}{NDCN}         & MAE                  & 5.78E+00 (1.67E+00)                         & 1.31E+01 (9.77E+00)                         \\
                              & DTW                  & 2.49E+02 (8.36E+01)                     & 6.31E+02 (4.97E+02)                         \\\midrule
\multirow{2}{*}{DNND}         & MAE                  & 9.92E-01 (2.65E-01)                         & 2.85E+00 (1.17E+00)                         \\
                              & DTW                  & 1.74E+01 (1.22E+01)                         & 1.18E+02 (5.53E+01)           \\ \midrule\midrule
\multirow{2}{*}{NDP4ND(Ours)} & MAE                  & \textbf{6.00E-01 (1.18E-01)}                & \textbf{9.09E-01 (4.87E-01)}                     \\
                              & DTW                  & \textbf{1.48E+01 (2.89E+00)}                & \textbf{2.87E+01 (1.61E+01)}              \\ 
\bottomrule
\end{tabular}
\end{table}

\begin{table}[htb]
\centering
\caption{The interpolation and extrapolation results across all testing network ODEs on \textit{Brain} dynamics in terms of quantitative predictive error and similarity.
The reported values are the \textit{mean} (\textit{standard deviation}) of the Mean Absolute Error (MAE) and Dynamic Time Warping (DTW) between prediction and ground truth.
The best results are bolded.
}
\label{table_brain}
\begin{tabular}{ccrr}
\toprule
Methods                       & \multicolumn{1}{c}{Metrics} & \multicolumn{1}{c}{Interpolation ($T_o\leq0.5$)} & \multicolumn{1}{c}{Extrapolation ($T_o>0.5$)} \\ \midrule
\multirow{2}{*}{NDCN}         & MAE                  & 1.79E+00 (8.85E-01)                         & 2.65E+01 (2.32E+01)                         \\
                              & DTW                  & 7.30E+01 (4.14E+01)                     & 1.32E+03 (1.16E+03)                         \\\midrule
\multirow{2}{*}{DNND}         & MAE                  & 1.03E+00 (1.21E-01)                         & 1.48E+00 (2.35E-01)                         \\
                              & DTW                  & 4.01E+01 (1.35E+01)                         & 5.79E+01 (6.98E+00)           \\ \midrule\midrule
\multirow{2}{*}{NDP4ND(Ours)} & MAE                  & \textbf{2.91E-01 (3.64E-02)}                & \textbf{6.06E-01 (1.02E-01)}                     \\
                              & DTW                  & \textbf{6.66E+00 (1.08E+00)}                & \textbf{1.46E+01 (1.54E+00)}              \\ 
\bottomrule
\end{tabular}
\end{table}

\begin{table}[htb]
\centering
\caption{The interpolation and extrapolation results across all testing network ODEs on \textit{Compartment Models} in epidemiology in terms of quantitative predictive error and similarity.
The reported values are the \textit{mean} (\textit{standard deviation}) of the Mean Absolute Error (MAE) and Dynamic Time Warping (DTW) between prediction and ground truth.
The best results are bolded.
}
\label{table_compartment}
\begin{tabular}{cccrr}
\toprule
Dynamics    & Methods                       & \multicolumn{1}{l}{Metrics} & \multicolumn{1}{c}{Interpolation ($T_o\leq50$)} & \multicolumn{1}{c}{Extrapolation ($T_o>50$)} \\ \midrule
\multirow{6}{*}{SIS}  & \multirow{2}{*}{NDCN}         & MAE & \textbf{6.32E-02 (3.94E-02)}   & 4.95E-01 (6.48E-01)  \\
                      &                               & DTW & \textbf{1.81E+00 (1.40E+00)}   & 2.48E+01 (3.24E+01)  \\\cmidrule{2-5} 
                      & \multirow{2}{*}{DNND}         & MAE & 1.27E-01 (1.79E-01)   & 1.13E+01 (4.88E+01) \\
                      &                               & DTW & 4.03E+00 (7.96E+00)   & 5.64E+02 (2.44E+03) \\ \cmidrule{2-5}  
                      & \multirow{2}{*}{NDP4ND(Ours)} & MAE & 1.04E-01 (7.91E-02)   & \textbf{4.45E-02 (4.34E-02)}   \\
                      &                               & DTW & 1.93E+00 (1.88E+00)   & \textbf{2.20E+00 (2.19E+00)} \\\midrule\midrule
\multirow{6}{*}{SIR}  & \multirow{2}{*}{NDCN}         & MAE & \textbf{1.27E-01 (1.79E-01)}   & 3.32E-01 (4.26E-01)    \\
                      &                               & DTW & 5.28E+00 (9.03E+00)   & 1.66E+01 (2.13E+01) \\ \cmidrule{2-5} 
                      & \multirow{2}{*}{DNND}         & MAE & 5.62E+00 (2.41E+01)   & 1.64E+04 (7.14E+04)       \\
                      &                               & DTW & 2.81E+02 (1.21E+03)   & 8.18E+05 (3.57E+06)\\ \cmidrule{2-5} 
                      & \multirow{2}{*}{NDP4ND(Ours)} & MAE & 1.51E-01 (8.69E-02)   & \textbf{5.41E-02 (8.16E-02)}  \\
                      &                               & DTW & \textbf{3.39E+00 (3.69E+00)}   & \textbf{2.70E+00 (4.08E+00)} \\ \midrule\midrule
\multirow{6}{*}{SEIS} & \multirow{2}{*}{NDCN}         & MAE & \textbf{1.02E-01 (1.23E-01)}   & 9.42E-01 (1.25E+00)   \\
                      &                               & DTW & 3.49E+00 (6.18E+00)   & 4.71E+01 (6.24E+01) \\ \cmidrule{2-5} 
                      & \multirow{2}{*}{DNND}         & MAE & 2.12E-01 (2.71E-01)   & 3.54E-01 (6.51E-01)  \\
                      &                               & DTW & 7.21E+00 (1.37E+01)   & 1.77E+01 (3.26E+01)  \\ \cmidrule{2-5}
                      & \multirow{2}{*}{NDP4ND(Ours)} & MAE & 1.03E-01 (7.32E-02)   & \textbf{3.14E-02 (2.12E-02)}  \\
                      &                               & DTW & \textbf{1.27E+00 (8.87E-01)}   & \textbf{1.57E+00 (1.06E+00)}  \\ 
\bottomrule
\end{tabular}
\end{table}

\begin{table}[htb]
\centering
\caption{The results of neural processes-based methods across all testing network ODEs in terms of quantitative predictive error and similarity.
The reported values are the \textit{mean} (\textit{standard deviation}) of the Mean Absolute Error (MAE) and Dynamic Time Warping (DTW) between prediction and ground truth.
The best results are bolded.
}
\label{table_mutu_NP}
\begin{tabular}{ccrr}
\toprule
Methods                       & \multicolumn{1}{c}{Metrics} & \multicolumn{1}{c}{Interpolation ($T_o\leq50$)} & \multicolumn{1}{c}{Extrapolation ($T_o>50$)} \\ \midrule
\multirow{2}{*}{NP$^{1\dagger}$}           & MAE                  & 5.77E+00 (1.37E+00)                         & 5.79E+00 (1.45E+00)               \\
                              & DTW                  & 2.86E+02 (6.90E+01)                         & 2.89E+02 (7.24E+01)                 \\\midrule
\multirow{2}{*}{NDP$^{2\dagger}$}          & MAE                  & 5.56E+00 (1.42E+00)                         & 5.75E+00 (1.46E+00)                         \\
                              & DTW                  & 2.76E+02 (7.04E+01)                         & 2.87E+02 (7.30E+01)       \\ \midrule
\multirow{2}{*}{NDP4ND w/o ode$^3$}           & MAE                  & 7.95E-01 (4.23E-01)                         & 7.75E-01 (4.85E-01)               \\
                              & DTW                  & 3.17E+01 (1.95E+01)                         & 3.87E+01 (2.43E+01)                 \\\midrule
\multirow{2}{*}{NDP4ND w/o z$^4$}          & MAE                  & 1.09E+00 (8.86E-01)                         & 1.06E+00 (8.81E-01)                         \\
                              & DTW    & 4.96E+01 (4.45E+01)                         & 5.31E+01 (4.41E+01)      \\\midrule \midrule     
\multirow{2}{*}{NDP4ND(Ours)} & MAE                  & \textbf{3.62E-01 (2.16E-01)}                & \textbf{3.67E-01 (2.73E-01)}                     \\
                              & DTW                  & \textbf{1.43E+01 (1.01E+01)}                & \textbf{1.82E+01 (1.37E+01)}              \\ 
\bottomrule
\end{tabular}
\footnotetext[1]{Neural processes (NP) \citep{DBLP:journals/corr/abs-1807-01622}: Stochastic processes parameterized by neural networks.}
\footnotetext[2]{Neural ODE processes (NDP) \citep{DBLP:conf/iclr/NorcliffeBDML21}: A class of stochastic processes, generalized the NPs defined over time by combining latent neural ODEs with NPs.}
\footnotetext[3]{A variant of our NDP4ND, removing ODE flow in the architecture, i.e., deleting $L_{l_{i}^{\mathbb{T}}}(t_{i}^{\mathbb{T}})$ from the input of neural network $\bm{d}$. This is to explore the impact of the ODE process of the NDP4ND on the results.}
\footnotetext[4]{A variant of our NDP4ND, removing $\tilde{z}$ from the input of neural network $\bm{d}$ in the architecture. This is to explore the impact of skip connection (i.e., add an information flow path that skips the ODE process) on the results.}
\footnotetext[\dagger]{Note that the NP and NDP are not specifically designed for network dynamics learning.
Using them as comparison methods is to verify the necessity of introducing interactions of time dynamics on networks.
Due to their limited flexibility, they cannot handle networked systems with arbitrary sizes. 
We thus give them the power to process network dynamics by utilizing the graph neural network to flexibly encode the observations.}
\end{table}

\begin{figure}[t]%
\centering
\includegraphics[width=1.\textwidth]{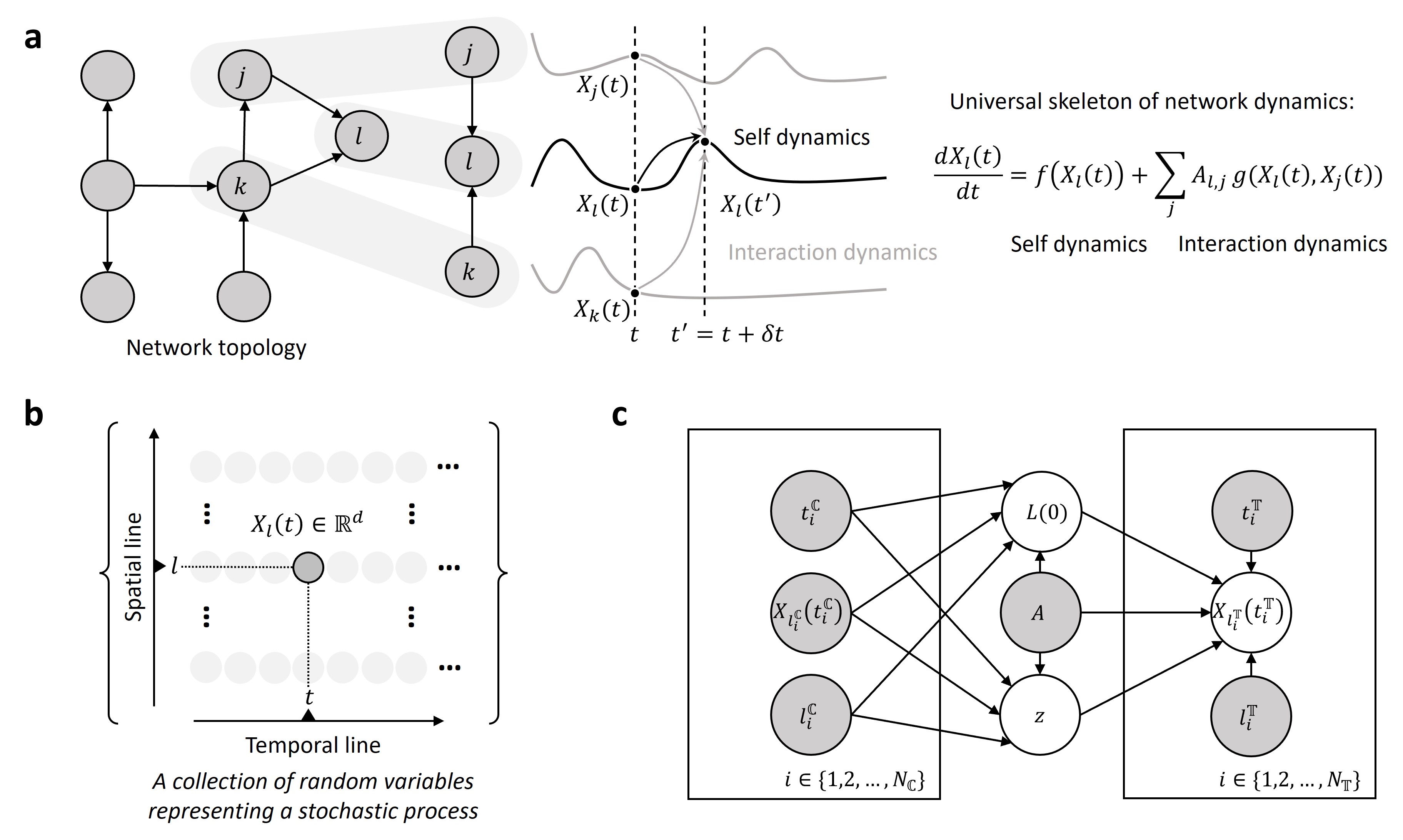}
\caption{
\textbf{a}, The state changes on node $l$ can be modeled by self dynamics and interaction dynamics, i.e., $\frac{dX_{l}(t)}{dt}=f(X_l(t))+\sum_{j}{A_{l,j}g(X_l(t),X_j(t))}$. 
The former quantifies the impact of one's own state on change, while the latter quantifies the impact of neighboring states on change.
\textbf{b}, We model a collection of $\{X_l(t)\}$ as a stochastic process, where the state $X_l(t)$ on node $l$ at time $t$ is a random variable.
In principle, as time and space (network) expand, there are infinitely many such random variables in a networked system.
\textbf{c}, The probabilistic graphical model of the generative process behind the NDP4ND.
$z$ and $L(0)$ denote the global random vector and the initial states in latent space respectively, through which uncertainty is introduced to enhance the model prediction facing scarce and noisy observations.
}\label{fig_skeleton}
\end{figure}

\end{document}


\title[Article Title]{\textit{Supplementary Information} of {Learning Continuous Network Emerging Dynamics from Scarce Observations via Data-Adaptive Stochastic Processes}}



















%
%
%




\maketitle

\tableofcontents



\clearpage

\begin{appendix}

\section{Model Details}\label{sec1}

We propose the Neural ODE Processes for Network Dynamics (NDP4ND) to represent a spatio-temporal trajectory function space of network dynamics for covering abundant network dynamics instances.
We first introduce the details of the NDP4ND, including the complete description of its computational diagram, shown in Fig.~1 in main text, and the architecture design of neural networks.
The proofs of the propositions in the main text will be provided later.

\subsection{Details of the Neural ODE Processes for Network Dynamics}

The overall computational diagram consists of the following five main steps (corresponding to Fig.~1 in main text):
\begin{itemize}
\item[\textbf{a,}] Let the observations ($\mathcal{D}_{\mathbb{C}}$) that capture the empirical topological structure and spatio-temporal behaviors be as the input of the NDP4ND.
\item[\textbf{b,}] Observations $\mathcal{D}_{\mathbb{C}}$ are first integrated into graphs based on different timestamps $t$, i.e., $G(t)=(X(t), M(t), A)$.
$X(t)\in\mathbb{R}^{n\times d}$ stores all observed states at time $t$ and the mask $M(t)$ has the same shape as $X(t)$, indicating which nodes have been observed (its value on the corresponding position is set to 1, otherwise it is 0). 
Fig.~\ref{fig_g_t} provides an illustration for $G(t)$.
Then, by using a neural network $\bm{\varphi}$, the observations can be encoded into the embedding ($r$) with considering the temporal and topological information.
Based on the embedding ($r$) of observations, a neural network $\bm{\rho}$ can be used to give the distribution of the global latent vector, i.e., $q(z|\mathcal{D}_{\mathbb{C}},A)$. 
The distribution is to sample specific global control variables $z$ and introduces uncertainty for propagation equations of network dynamics in hidden spaces.
\begin{figure}[t]%
\centering
\includegraphics[width=.6\textwidth]{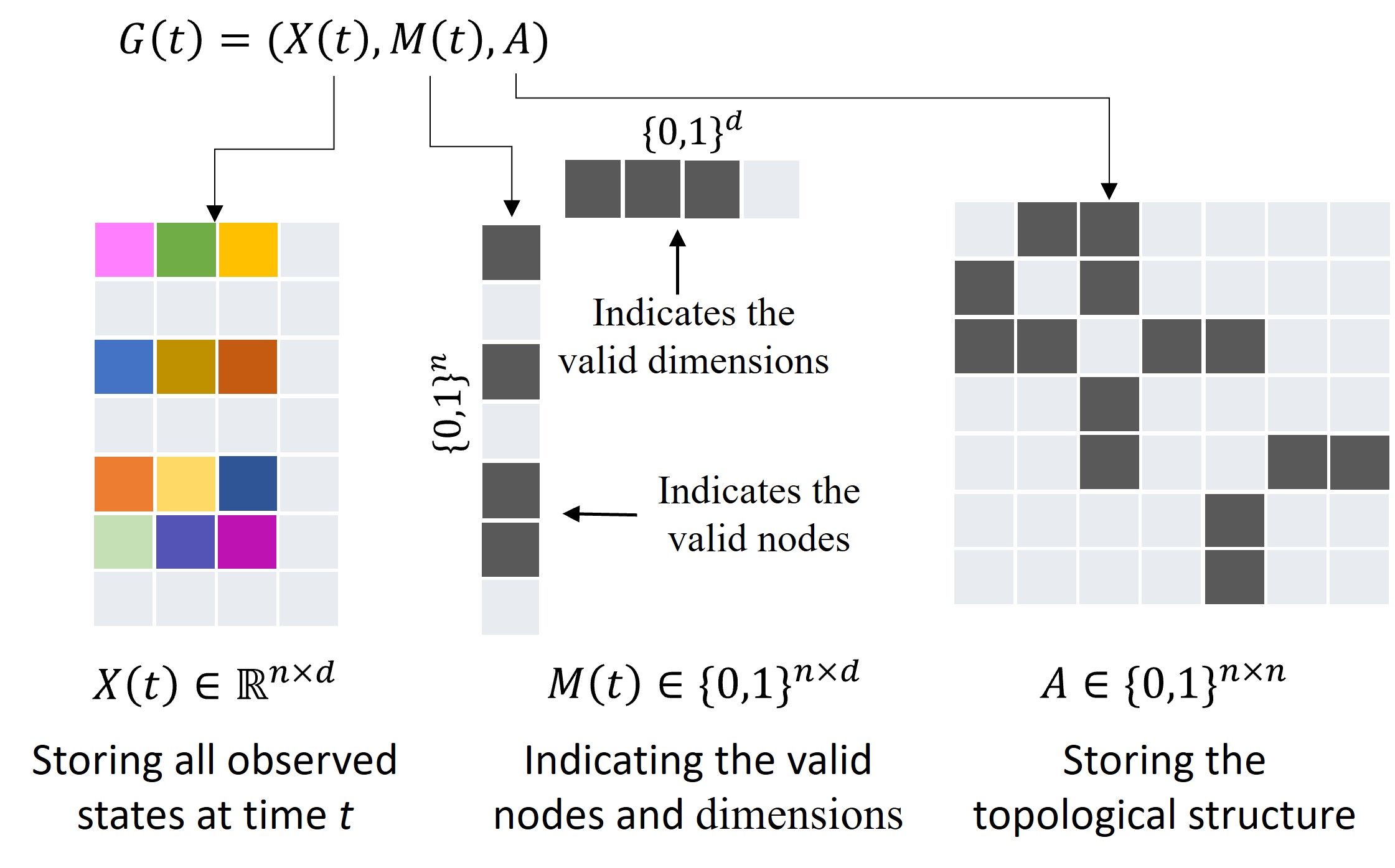}
\caption{
An illustration for $G(t)=(X(t),M(t),A)$.
$X(t)\in\mathbb{R}^{n\times d}$ stores all observed states at time $t$ and the mask $M(t)$ has the same shape as $X(t)$, indicating which nodes have been observed (its value on the corresponding position is set to 1, otherwise it is 0).
}\label{fig_g_t}
\end{figure}
\item[\textbf{c,}] Meanwhile, based on observations, the distribution of the initial latent state ($q(L(0)|\mathcal{D}_{\mathbb{C}},A)$) can be obtained through a neural network $\bm{e}$.
\item[\textbf{d,}] Built on sampled global control variable $z$ and deterministic representation of observations $r$, the neural dynamics equation in the hidden space is constructed via the universal skeleton of network dynamics\citep{2013Universality}, i.e.,
$
\dot{L}_{l^{\mathbb{T}}_{i}}(t)=\bm{S}(L_{l^{\mathbb{T}}_i}(t),\tilde{z})+\sum_{j=1}^{n}A_{l^{\mathbb{T}}_i,l^{\mathbb{T}}_j}\bm{I}(L_{l^{\mathbb{T}}_i}(t),L_{l^{\mathbb{T}}_j}(t),\tilde{z})
$,
where $\tilde{z}=[z;r]$ and $n$ is the system size.
$\bm{S}$ and $\bm{I}$ are neural networks that characterize self dynamics and interaction dynamics, respectively.
\item[\textbf{e,}] Given the neural dynamics equation, topology $A$, initial latent state $L(0)$, test node $l^{\mathbb{T}}_i$, and test time $t^{\mathbb{T}}_i$, we can evolve the hidden state by solving the network ordinary differential equation (ODE) as
$
    L_{l^{\mathbb{T}}_{i}}(t^{\mathbb{T}}_i)=L_{l^{\mathbb{T}}_{i}}(0)+\int_{t_0}^{t^{\mathbb{T}}_i}{\bigg( \bm{S}(L_{l^{\mathbb{T}}_i}(t),\tilde{z})+\sum_{j=1}^{n}A_{l^{\mathbb{T}}_i,l^{\mathbb{T}}_j}\bm{I}(L_{l^{\mathbb{T}}_i}(t),L_{l^{\mathbb{T}}_j}(t),\tilde{z})}\bigg)dt,
$.
Here we use the adjoint method \citep{NEURIPS2018_69386f6b} to solve it to ensure that the model can perform backpropagation to learn parameters.
Assuming that output states are noisy, for a given $L_{l^{\mathbb{T}}_{i}}(t^{\mathbb{T}}_i)$, we can decode it into the predictive state by
$X_{l^{\mathbb{T}}_{i}}(t^{\mathbb{T}}_i)\sim p(X_{l^{\mathbb{T}}_{i}}(t^{\mathbb{T}}_i)|t^{\mathbb{T}}_i,l^{\mathbb{T}}_{i},z,L(0),A)=\mathcal{N}(\mu_{{X}_{l^{\mathbb{T}}_i}}(t^{\mathbb{T}}_i),\sigma^2_{{X}_{l^{\mathbb{T}}_i}}(t^{\mathbb{T}}_i))$,
where $[\mu_{{X}_{l^{\mathbb{T}}_i}}(t^{\mathbb{T}}_i); \sigma^2_{{X}_{l^{\mathbb{T}}_i}}(t^{\mathbb{T}}_i)]=\bm{d}(t^{\mathbb{T}}_i,L_{l^{\mathbb{T}}_{i}}(t^{\mathbb{T}}_i),\tilde{z})$ 
and $\bm{d}$ is as a neural network.
Using $\tilde{z}$ as the input of $\bm{d}$ is equivalent to add an information flow path that skips the ODE process, i.e., a skip connection.
This makes the NDP4ND at least as expressive as Neural Processes (NPs) \citep{DBLP:journals/corr/abs-1807-01622} and Neural ODE Processes (NDPs) \citep{DBLP:conf/iclr/NorcliffeBDML21} in principle, also enabling it to model network dynamics that are not solely determined by some underlying ODEs.

\end{itemize}

\subsubsection{Design of the neural networks in the NDP4ND}
The neural networks involved in the model architecture include: $\bm{\varphi}$, $\bm{\rho}$, $\bm{e}$, $\bm{S}$, $\bm{I}$, and $\bm{d}$.
We provide their specific implementations in the experiments below.

$\bm{\varphi}: [t_k;G(t_k)]\rightarrow r_k$.
\begin{itemize}[itemindent=1cm] 
            \item $h_t=enc_t(t_k)$, where $enc_t$ is a multilayer perceptron with 1 hidden layer and LeakyReLU activations.
            \item $h_{G(t_k)}=enc_G(G(t_k))$, $enc_G$ has 2 graph attention (GAT) layers with the head number of 8 and LeakyReLU activations. Note that the original states $X(t)$ are encoded before feeding $enc_G$, i.e., $h_{X(t)}=enc_X(X(t))$, where $enc_X$ is a multilayer perceptron with 1 hidden layer and LeakyReLU activations.
            \item $r_k=enc_r([h_t;h_{G(t)}])$, where $enc_r$ is a multilayer perceptron with 2 hidden layers and LeakyReLU activations.
\end{itemize}

To obtain a representation of the observations, we use an aggregation operation with permutation invariance to aggregate all $r_k$, i.e., $r=agg(\{r_k\}_{k=1}^{K})$. 
We choose the element-wise mean as the aggregation operation.

$\bm{\rho}: r\rightarrow [\mu_z; \sigma^2_z]$, where $\mu_z$ and $\sigma^2_z$ are the mean and variance of the distribution $q(z|\mathcal{D}_{\mathbb{C}},A)$.
\begin{itemize}[itemindent=1cm] 
    \item $h_z=enc_z(r)$, where $enc_z$ is a multilayer perceptron with 1 hidden layer and LeakyReLU activations.
    \item $\mu_z=enc_{\mu_z}(h_z)$, where $enc_{\mu_z}$ is a multilayer perceptron with 1 hidden layer.
    \item $\sigma_z=0.1 + 0.9 \times \text{sigmoid}{(enc_{\sigma_z}(h_z))}$, where $enc_{\sigma_z}$ is a multilayer perceptron with 1 hidden layer.
\end{itemize}

$\bm{e}: \mathcal{D}_{\mathbb{C}}\rightarrow [\mu_{L(0)},\sigma^2_{L(0)}]$, where $\mu_{L(0)}$ and $\sigma^2_{L(0)}$ are the mean and variance of the distribution $q(L(0)|\mathcal{D}_{\mathbb{C}},A)$. 
Although the distribution of $L(0)$ can also be obtained through the above fashion, we handle it in a simpler way when the initial states are always known.
\begin{itemize}[itemindent=1cm] 
    \item $h_{L(0)}=enc_{L(0)}(X(0))$, where $X(0)$ is a shorthand for the initial states of all nodes and $enc_{L(0)}$ is a multilayer perceptron with 1 hidden layer and LeakyReLU activations.
    \item $\mu_{L(0)}=enc_{\mu_{L(0)}}(h_z)$, where $enc_{\mu_{L(0)}}$ is a multilayer perceptron with 1 hidden layer.
    \item $\sigma_{L(0)}=0.1 + 0.9 \times \text{sigmoid}{(enc_{\sigma_{L(0)}}(h_{L(0)}))}$, where $enc_{\sigma_{L(0)}}$ is a multilayer perceptron with 1 hidden layer.
\end{itemize}

$\bm{S}: [L_{l^{\mathbb{T}}_i}(t);\tilde{z}]\rightarrow L^{self}_{l^{\mathbb{T}}_i}(t)$, where $\tilde{z}=[r;z]$ and $l^{\mathbb{T}}_i$ is the location of test node.
\begin{itemize}[itemindent=1cm] 
    \item $L^{self}_{l^{\mathbb{T}}_i}(t)=S(L_{l^{\mathbb{T}}_i}(t),\tilde{z})$, where $S$ is a multilayer perceptron with 2 hidden layers and LeakyReLU activations.
\end{itemize}

$\bm{I}: [L_{l^{\mathbb{T}}_i}(t);L_{l^{\mathbb{T}}_j}(t);\tilde{z}]\rightarrow L^{inter}_{l^{\mathbb{T}}_i,l^{\mathbb{T}}_j}(t)$, where $\tilde{z}=[r;z]$, $l^{\mathbb{T}}_i$ is the location of test node and $l^{\mathbb{T}}_j$ is the node that interacts with $l^{\mathbb{T}}_i$.
\begin{itemize}[itemindent=1cm] 
    \item $L^{inter}_{l^{\mathbb{T}}_i,l^{\mathbb{T}}_j}(t)=I(L_{l^{\mathbb{T}}_i}(t),L_{l^{\mathbb{T}}_j}(t),\tilde{z})$, where $I$ is a multilayer perceptron with 2 hidden layers and LeakyReLU activations.
\end{itemize}

$\bm{d}: [L_{l^{\mathbb{T}}_{i}}(t^{\mathbb{T}}_i);t^{\mathbb{T}}_i;\tilde{z}]\rightarrow [\mu_{{X}_{l^{\mathbb{T}}_i}}(t^{\mathbb{T}}_i);\sigma^2_{{X}_{l^{\mathbb{T}}_i}}(t^{\mathbb{T}}_i)]$, where $\tilde{z}=[r;z]$, $l^{\mathbb{T}}_i$ and $t^{\mathbb{T}}_i$ are the test node and test time, respectively. 
\begin{itemize}[itemindent=1cm] 
    \item $L^{dec}_{l^{\mathbb{T}}_{i}}(t^{\mathbb{T}}_i)=dec_{L}(L_{l^{\mathbb{T}}_{i}}(t^{\mathbb{T}}_i),t^{\mathbb{T}}_i,\tilde{z})$, where $dec_{L}$ a multilayer perceptron with 2 hidden layers and LeakyReLU activations.
    \item $\mu_{X_{l^{\mathbb{T}}_{i}}(t^{\mathbb{T}}_i)}=dec_{\mu_{X}}(L^{dec}_{l^{\mathbb{T}}_{i}}(t^{\mathbb{T}}_i))$, where $enc_{\mu_{X}}$ is a multilayer perceptron with 1 hidden layer.
    \item $\sigma_{X_{l^{\mathbb{T}}_{i}}(t^{\mathbb{T}}_i)}=0.01 + 0.99 \times \text{softplus}{(dec_{\sigma_{X}}(L^{dec}_{l^{\mathbb{T}}_{i}}(t^{\mathbb{T}}_i)))}$, where, $\text{softplus}(a)=\log{1+e^{a}}$ and $dec_{\sigma_{X}}$ is a multilayer perceptron with 1 hidden layer.
\end{itemize}

Although automatic machine learning technology, such as Bayesian optimization \citep{20183068}, can be used for fine-grained hyperparameter optimization, we empirically set the hidden dimension of all multilayer perceptrons to 20, and the hidden dimension of graph attention layers to 16.

\subsubsection{The derivation of training loss}
Given a network $A$ and a context set  $\mathcal{D}_{\mathbb{C}}$,
we revisit the generative model of the NDP4ND as
\begin{equation*}
 p_{X,z,L(0)}=
   {p(z\vert \mathcal{D}_{\mathbb{C}},A)}
    {p(L(0)\vert \mathcal{D}_{\mathbb{C}},A)}
    \prod_{i=1}^{N_{\mathbb{T}}}{p(X_{l^{\mathbb{T}}_{i}}(t^{\mathbb{T}}_{i})\vert t^{\mathbb{T}}_i,l^{\mathbb{T}}_{i},z,L(0),A)},
\label{eq1}
\end{equation*}
where $p_{X,z,L(0)}$ is a shorthand for the generative model $
 p(X_{l^{\mathbb{T}}_{1:N_{\mathbb{T}}}}(t^{\mathbb{T}}_{1:N_{\mathbb{T}}}),z,L(0)\vert t^{\mathbb{T}}_{1:N_{\mathbb{T}}},l^{\mathbb{T}}_{1:N_{\mathbb{T}}},\mathcal{D}_{\mathbb{C}},A)
$,
$t^{\mathbb{T}}_{1:N_{\mathbb{T}}}$, $l^{\mathbb{T}}_{1:N_{\mathbb{T}}}$, and $X_{l^{\mathbb{T}}_{1:N_{\mathbb{T}}}}(t^{\mathbb{T}}_{1:N_{\mathbb{T}}})$ are the target set, $L(0)$ and $z$ denote the initial states in latent space and the global random vector that can control the network ODE, respectively.

Due to the embedded neural networks, the generative process is highly nonlinear, resulting in intractability to calculate the true posterior.
We, thus, follow \citep{pmlr-v80-garnelo18a,DBLP:journals/corr/abs-1807-01622,DBLP:conf/iclr/NorcliffeBDML21} and use an amortized variational inference to learn the parameters in the neural networks.

Given a network $A$, a context set 
$\mathcal{D}_{\mathbb{C}}=
\{(t^{\mathbb{C}}_1,l^{\mathbb{C}}_1,X_{l^{\mathbb{C}}_1}(t^{\mathbb{C}}_1)),...,(t^{\mathbb{C}}_{N_{\mathbb{C}}},l^{\mathbb{C}}_{N_{\mathbb{C}}},X_{l^{\mathbb{C}}_{N_{\mathbb{C}}}}(t^{\mathbb{C}}_{N_{\mathbb{C}}}))\}
$ and a target set $\mathcal{D}_{\mathbb{T}}=
\{(t^{\mathbb{T}}_1,l^{\mathbb{T}}_1,X_{l^{\mathbb{T}}_1}(t^{\mathbb{T}}_1)),...,(t^{\mathbb{T}}_{N_{\mathbb{T}}},l^{\mathbb{T}}_{N_{\mathbb{T}}},X_{l^{\mathbb{T}}_{N_{\mathbb{T}}}}(t^{\mathbb{T}}_{N_{\mathbb{T}}}))\}
$, the derivation process of the variational evidence lower-bound (ELBO) is as follows
\begin{equation*}
\centering
\begin{aligned}
&\log{p(X_{l^{\mathbb{T}}_{1:N_{\mathbb{T}}}}(t^{\mathbb{T}}_{1:N_{\mathbb{T}}})\vert t^{\mathbb{T}}_{1:N_{\mathbb{T}}},l^{\mathbb{T}}_{1:N_{\mathbb{T}}},\mathcal{D}_{\mathbb{C}},A)}\\
=&\log{\frac{{p(z\vert \mathcal{D}_{\mathbb{C}},A)}
    {p(L(0)\vert \mathcal{D}_{\mathbb{C}},A)}
    \prod_{i=1}^{N_{\mathbb{T}}}{p(X_{l^{\mathbb{T}}_{i}}(t^{\mathbb{T}}_{i})\vert t^{\mathbb{T}}_i,l^{\mathbb{T}}_{i},z,L(0),A)}}{p(z|\mathcal{D}_{\mathbb{T}},A)p(L(0)|\mathcal{D}_{\mathbb{T}},A)}}\\
=&\mathbb{E}_{q(z|\mathcal{D}_{\mathbb{T}},A)q(L(0)|\mathcal{D}_{\mathbb{T}},A)}\Big[\sum_{i=1}^{N_{\mathbb{T}}}{\log{p(X_{l_{i}^{\mathbb{T}}}(t_{i}^{\mathbb{T}})\vert t_{i}^{\mathbb{T}},l_{i}^{\mathbb{T}},z,L(0),A)}} \Big]\\
&-{KL}(q(z|\mathcal{D}_{\mathbb{T}},A)\Vert q(z|\mathcal{D}_{\mathbb{C}},A))-{KL}(q(L(0)|\mathcal{D}_{\mathbb{T}},A)\Vert q(L(0)|\mathcal{D}_{\mathbb{C}},A))\\
&+{KL}(q(z|\mathcal{D}_{\mathbb{T}},A)\Vert q(z|\mathcal{D}_{\mathbb{T}},A))+{KL}(q(L(0)|\mathcal{D}_{\mathbb{T}},A)\Vert q(L(0)|\mathcal{T}_{\mathbb{C}},A))\\
\geq&\mathbb{E}_{q(z|\mathcal{D}_{\mathbb{T}},A)q(L(0)|\mathcal{D}_{\mathbb{T}},A)}\Big[\sum_{i=1}^{N_{\mathbb{T}}}{\log{p(X_{l_{i}^{\mathbb{T}}}(t_{i}^{\mathbb{T}})\vert t_{i}^{\mathbb{T}},l_{i}^{\mathbb{T}},z,L(0),A)}} \Big]\\
&-{KL}(q(z|\mathcal{D}_{\mathbb{T}},A)\Vert q(z|\mathcal{D}_{\mathbb{C}},A))-{KL}(q(L(0)|\mathcal{D}_{\mathbb{T}},A)\Vert q(L(0)|\mathcal{D}_{\mathbb{C}},A)).
\end{aligned}
\end{equation*}
Since $L(0)$ is only related to $X(0)$ in our design, when the initial states are always known in our implementation, we have $q(L(0)|\mathcal{D}_{\mathbb{C}},A)=q(L(0)|\mathcal{D}_{\mathbb{T}},A)$.
Thus, we can simplify an approximate lower bound ($\mathcal{ELBO}$) as
\begin{equation*}
\centering
\begin{aligned}
&\log{p(X_{l^{\mathbb{T}}_{1:N_{\mathbb{T}}}}(t^{\mathbb{T}}_{1:N_{\mathbb{T}}})\vert t^{\mathbb{T}}_{1:N_{\mathbb{T}}},l^{\mathbb{T}}_{1:N_{\mathbb{T}}},\mathcal{D}_{\mathbb{C}},A)}\\&\geq\mathbb{E}_{q(z|\mathcal{D}_{\mathbb{T}},A)q(L(0)|\mathcal{D}_{\mathbb{T}},A)}\Big[\sum_{i=1}^{N_{\mathbb{T}}}{\log{p(X_{l_{i}^{\mathbb{T}}}(t_{i}^{\mathbb{T}})\vert t_{i}^{\mathbb{T}},l_{i}^{\mathbb{T}},z,L(0),A)}} \Big]\\
&-\beta{KL}(q(z|\mathcal{D}_{\mathbb{T}},A)\Vert q(z|\mathcal{D}_{\mathbb{C}},A)),
\end{aligned}
\end{equation*}
where $\beta$ is a tunable parameter to relieve the KL vanishing.

To learn the distribution over trajectory functions, we train the model on a set of state observations generated from $B$ different network dynamics with $A^{(1)},...,A^{(B)}$ topological structures.
Also, we split the observations into $B$ context sets $\mathcal{D}^{1:B}_{\mathbb{C}}$ and $B$ target sets $\mathcal{D}^{1:B}_{\mathbb{T}}$. 
$\mathcal{D}^{b}_{\mathbb{C}}$ is usually a subset of $\mathcal{D}^{b}_{\mathbb{T}}$.
By accumulating the approximate lower bounds ($\mathcal{ELBO}_{1},\mathcal{ELBO}_{2},...,\mathcal{ELBO}_{B}$) of $B$ network dynamics, the derived total training loss under all observations is as follows
\begin{equation*}
\begin{aligned}
        \mathcal{L}=&\frac{1}{B}{\sum_{b=1}^{B}{-\mathcal{ELBO}_{b}}}\\
        =&\frac{1}{B}\sum_{b=1}^{B}\bigg[\mathbb{E}_{q(z|\mathcal{D}^{b}_{\mathbb{T}},A^{(b)})q(L(0)|\mathcal{D}^{b}_{\mathbb{T}},A)}\bigg(\sum_{i=1}^{N^{b}_{\mathbb{T}}}{-\log{p(X_{l_{i}^{\mathbb{T}_{b}}}(t_{i}^{\mathbb{T}_{b}})\vert t_{i}^{\mathbb{T}_{b}},l_{i}^{\mathbb{T}_{b}},z,L(0),A^{(b)})}}\bigg)
        \\&+\beta{KL}(q(z|\mathcal{D}^{b}_{\mathbb{T}},A^{(b)})\Vert q(z|\mathcal{D}^{b}_{\mathbb{C}},A^{(b)}))\bigg].
\end{aligned}
\end{equation*}
We use reparametrization trick and stochastic gradient descent to minimize the loss.

\subsubsection{The derivation of predictive distribution}

Given the learned NDP4ND, a network topology $A$, and a set of state observations $\mathcal{D}_{\mathbb{C}}
$ from the task at hand,
we obtain the predictive distribution of the states on any node $l^{\mathbb{T}}$ at any time $t^{\mathbb{T}}$ as
\begin{equation*}
\begin{aligned}
p(X_{l^{\mathbb{T}}}(t^{\mathbb{T}})\vert t^{\mathbb{T}},l^{\mathbb{T}},\mathcal{D}_{\mathbb{C}},A)&=\int
q(z|\mathcal{D}_{\mathbb{C}},A)q(L(0)|\mathcal{D}_{\mathbb{C}},A)p(X_{l^{\mathbb{T}}}(t^{\mathbb{T}})\vert t^{\mathbb{T}},l^{\mathbb{T}},z,L(0),A)dzdL(0),
\end{aligned}
\end{equation*}
where, $p(X_{l^{\mathbb{T}}}(t^{\mathbb{T}})\vert t^{\mathbb{T}},l^{\mathbb{T}},z,L(0),A)=\mathcal{N}(\mu_{{X}_{l^{\mathbb{T}}}}(t^{\mathbb{T}}),\sigma^2_{{X}_{l^{\mathbb{T}}}}(t^{\mathbb{T}}))$.
$\mu^{(j)}_{X_{l^{\mathbb{T}}}}(t^{\mathbb{T}})$ and $\sigma^{(j)}_{X_{l^{\mathbb{T}}}}(t^{\mathbb{T}})$ are outputs of decoder network $\bm{d}$.

We can use the Monte Carlo method to approximate the integral and obtain the first and second moments as follows
\begin{equation*}
\begin{aligned}
\mathbb{E}[X_{l^{\mathbb{T}}}(t^{\mathbb{T}})\vert t^{\mathbb{T}},l^{\mathbb{T}},\mathcal{D}_{\mathbb{C}},A]&=\int{\mathbb{E}[X_{l^{\mathbb{T}}}(t^{\mathbb{T}})\vert t^{\mathbb{T}},l^{\mathbb{T}},z,L(0),A]q(z|\mathcal{D}_{\mathbb{C}},A)q(L(0)|\mathcal{D}_{\mathbb{C}},A)dzdL(0)}\\
&=\int{{\mu_{{X}_{l^{\mathbb{T}}}}(t^{\mathbb{T}})}q(z|\mathcal{D}_{\mathbb{C}},A)q(L(0)|\mathcal{D}_{\mathbb{C}},A)}dzdL(0)\\
&\approx \frac{1}{J}\sum_{j=1}^{J}{\mu^{(j)}_{{X}_{l^{\mathbb{T}}}}(t^{\mathbb{T}})},
\end{aligned}
\end{equation*}
\begin{equation*}
\begin{aligned}
\mathbb{E}[X^2_{l^{\mathbb{T}}}(t^{\mathbb{T}})\vert t^{\mathbb{T}},l^{\mathbb{T}},\mathcal{D}_{\mathbb{C}},A]&=\int{\mathbb{E}[X^2_{l^{\mathbb{T}}}(t^{\mathbb{T}})\vert t^{\mathbb{T}},l^{\mathbb{T}},z,L(0),A]q(z|\mathcal{D}_{\mathbb{C}},A)q(L(0)|\mathcal{D}_{\mathbb{C}},A)dzdL(0)}\\
&=\int{{[(\sigma_{{X}_{l^{\mathbb{T}}}}(t^{\mathbb{T}}))^2+(\mu_{{X}_{l^{\mathbb{T}}}}(t^{\mathbb{T}}))^2]}q(z|\mathcal{D}_{\mathbb{C}},A)q(L(0)|\mathcal{D}_{\mathbb{C}},A)}dzdL(0)\\
&\approx \frac{1}{J}\sum_{j=1}^{J}{(\sigma^{(j)}_{{X}_{l^{\mathbb{T}}}}(t^{\mathbb{T}}))^2+(\mu^{(j)}_{{X}_{l^{\mathbb{T}}}}(t^{\mathbb{T}}))^2},
\end{aligned}
\end{equation*}
where $J$ is the sampling number for $z$ and $L(0)$.
We calculate $\mu^{(j)}_{{X}_{l^{\mathbb{T}}}}(t^{\mathbb{T}})$ and $\sigma^{(j)}_{{X}_{l^{\mathbb{T}}}}(t^{\mathbb{T}})$ by sampling $z^{(j)}$ and $L^{(j)}(0)$ from $q(z|\mathcal{D}_{\mathbb{C}},A)$ and $q(L(0)|\mathcal{D}_{\mathbb{C}},A)$ respectively.

Using the moment matching, we can construct a Gaussian posterior approximation for the predictive distribution as
\begin{equation*}
p(X_{l^{\mathbb{T}}}(t^{\mathbb{T}})\vert t^{\mathbb{T}},l^{\mathbb{T}},\mathcal{D}_{\mathbb{C}},A)\approx \mathcal{N}(\mathcal{Z}_1,\mathcal{Z}_2),
\end{equation*}
where, 
\begin{equation*}
\mathcal{Z}_1=\mathbb{E}[X_{l^{\mathbb{T}}}(t^{\mathbb{T}})\vert t^{\mathbb{T}},l^{\mathbb{T}},\mathcal{D}_{\mathbb{C}},A],
\end{equation*}
\begin{equation*}
\mathcal{Z}_2=\mathbb{E}[X^2_{l^{\mathbb{T}}}(t^{\mathbb{T}})\vert t^{\mathbb{T}},l^{\mathbb{T}},\mathcal{D}_{\mathbb{C}},A]-(\mathbb{E}[X_{l^{\mathbb{T}}}(t^{\mathbb{T}})\vert t^{\mathbb{T}},l^{\mathbb{T}},\mathcal{D}_{\mathbb{C}},A])^2.
\end{equation*}

\subsection{Proofs}



In the NDP4ND, given $\tilde{z}$ and $L_{l^{\mathbb{T}}_{i}}(0)$, the evolved latent state $L_{l^{\mathbb{T}}_{i}}(t^{\mathbb{T}}_i)$ can be seen as a deterministic function dominated by $t^{\mathbb{T}}_i$ and $l^{\mathbb{T}}_{i}$, involving the interaction between nodes.
Thus, we have the following Lemma to assist in proving our proposition.

\begin{lemma}
    The evolved latent state $L_{l^{\mathbb{T}}_{i}}(t^{\mathbb{T}}_i)$ can be seen as a deterministic function $\mathcal{F}(t^{\mathbb{T}}_i,l^{\mathbb{T}}_{i})$ for a given fixed $\tilde{z}$ and $L_{l^{\mathbb{T}}_{i}}(0)$.
\label{lemma2}
\end{lemma}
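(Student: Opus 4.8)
The plan is to recognize the evolved latent state as the coordinate projection of the unique solution of an initial value problem, so that the whole statement reduces to the classical existence-and-uniqueness theorem for ordinary differential equations (the Picard--Lindel\"of / Cauchy--Lipschitz theorem). First I would stack the per-node latent states into a single vector $L(t)=(L_{l^{\mathbb{T}}_1}(t),\dots,L_{l^{\mathbb{T}}_n}(t))$ so that the coupled node-wise equations of the neural dynamics equation become one autonomous system $\dot{L}(t)=F(L(t))$, where the right-hand side $F$ is assembled from the neural networks $\bm{S}$ and $\bm{I}$ together with the fixed adjacency matrix $A$ and the fixed control $\tilde{z}$. The crucial observation is that, once $\tilde{z}$ and $A$ are held fixed, $F$ carries no stochasticity and no explicit time dependence: it is a deterministic map of the state $L$ alone.

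Next I would verify that $F$ satisfies a Lipschitz condition, which is the hypothesis the existence-uniqueness theorem requires. Here I would use that $\bm{S}$ and $\bm{I}$ are finite compositions of affine maps and LeakyReLU activations; each such building block is globally Lipschitz (LeakyReLU has slope bounded by $1$ and the weights are fixed), so their composition, and hence $F$, is globally Lipschitz in $L$. With the Lipschitz bound in hand, the Picard--Lindel\"of theorem guarantees that for the fixed initial condition $L(0)$ there exists a unique maximal solution $t\mapsto L(t)$; global Lipschitzness further upgrades this to existence on the whole time axis, so $L_{l^{\mathbb{T}}_i}(t^{\mathbb{T}}_i)$ is well defined for every admissible $t^{\mathbb{T}}_i$.

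Finally I would define $\mathcal{F}(t^{\mathbb{T}}_i,l^{\mathbb{T}}_i)$ to be the $l^{\mathbb{T}}_i$-th coordinate of this unique solution evaluated at time $t^{\mathbb{T}}_i$. Because the solution is unique, no freedom remains: the pair $(t^{\mathbb{T}}_i,l^{\mathbb{T}}_i)$ determines the output without ambiguity, which is exactly the claimed deterministic dependence. The node index $l^{\mathbb{T}}_i$ enters in two ways --- it selects which coordinate of $L$ is read off, and it fixes which row of $A$ governs the corresponding self- and interaction-dynamics --- but both are deterministic selections, so neither disturbs the argument.

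I expect the main obstacle to be the Lipschitz verification rather than the invocation of the ODE theorem: strictly speaking one must argue that the interaction term, which sums contributions from all $n$ neighbours, retains a finite Lipschitz constant. This holds because the sum is finite and each summand is Lipschitz, with the constant scaling at worst linearly in the degree. Once this is established, the remainder is a direct appeal to Picard--Lindel\"of, and the coordinate-projection step defining $\mathcal{F}$ is immediate.
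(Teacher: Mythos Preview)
Your proposal is correct and in fact more rigorous than the paper's own argument. The paper simply writes out the integral representation
\[
L_{l^{\mathbb{T}}_{i}}(t^{\mathbb{T}}_i)=L_{l^{\mathbb{T}}_{i}}(0)+\int_{t_0}^{t^{\mathbb{T}}_i}\Big( \bm{S}(L_{l^{\mathbb{T}}_i}(t),\tilde{z})+\sum_{j=1}^{n}A_{l^{\mathbb{T}}_i,l^{\mathbb{T}}_j}\bm{I}(L_{l^{\mathbb{T}}_i}(t),L_{l^{\mathbb{T}}_j}(t),\tilde{z})\Big)dt
\]
and observes that the only stochastic ingredients are $\tilde{z}$ and the initial latent state; once those are frozen, the expression is declared deterministic in $(t^{\mathbb{T}}_i,l^{\mathbb{T}}_i)$ by inspection. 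What the paper leaves implicit is precisely the step you supply: that the integral equation actually \emph{determines} a single trajectory rather than a family of solutions. Your route---stacking the nodes into one autonomous system, checking the global Lipschitz property of the LeakyReLU-based vector field, and invoking Picard--Lindel\"of---fills this gap and also yields global-in-time existence as a byproduct. The paper's approach buys brevity; yours buys mathematical completeness, and in particular makes explicit that the full initial vector $L(0)$ (not merely the $i$-th coordinate) must be fixed for the coupled system to be well posed, a point the lemma statement itself glosses over.
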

\begin{proof}
We have the fact that 
\begin{equation*}
        L_{l^{\mathbb{T}}_{i}}(t^{\mathbb{T}}_i)=L_{l^{\mathbb{T}}_{i}}(0)+\int_{t_0}^{t^{\mathbb{T}}_i}{\bigg( \bm{S}(L_{l^{\mathbb{T}}_i}(t),\tilde{z})+\sum_{j=1}^{n}A_{l^{\mathbb{T}}_i,l^{\mathbb{T}}_j}\bm{I}(L_{l^{\mathbb{T}}_i}(t),L_{l^{\mathbb{T}}_j}(t),\tilde{z})}\bigg)dt,
\end{equation*}
where, only $\tilde{z}$ (global control variable) and $L_{l^{\mathbb{T}}_{i}}(0)$ (initial hidden states) are uncertain, while the rest are deterministic.
Therefore, when both $\tilde{z}$ and $L_{l^{\mathbb{T}}_{i}}(0)$ are fixed, the above equation can be regarded as a deterministic function controlled by $t^{\mathbb{T}}_i$ and $l^{\mathbb{T}}_{i}$, i.e., $\mathcal{F}(t^{\mathbb{T}}_i,l^{\mathbb{T}}_{i})$.
\end{proof}

\begin{proposition}
    NDP4ND satisfies the exchangeability and consistency conditions.
\label{prop1}
\end{proposition}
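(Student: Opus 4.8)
The plan is to verify the two Kolmogorov conditions directly from the generative model and then invoke the Kolmogorov extension theorem, concluding that the finite-dimensional distributions produced by NDP4ND are the marginals of a genuine stochastic process over trajectory functions. The backbone of the argument is the conditional-independence structure of the target outputs given the global latents, which is exactly what Lemma~\ref{lemma2} supplies.

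First I would fix the network $A$ and context set $\mathcal{D}_{\mathbb{C}}$, and write the finite-dimensional marginal over an arbitrary target collection $(t^{\mathbb{T}}_{1:N},l^{\mathbb{T}}_{1:N})$ as
\begin{equation*}
p(X_{l^{\mathbb{T}}_{1:N}}(t^{\mathbb{T}}_{1:N})\mid t^{\mathbb{T}}_{1:N},l^{\mathbb{T}}_{1:N},\mathcal{D}_{\mathbb{C}},A)=\int q(z|\mathcal{D}_{\mathbb{C}},A)\,q(L(0)|\mathcal{D}_{\mathbb{C}},A)\prod_{i=1}^{N}p(X_{l^{\mathbb{T}}_i}(t^{\mathbb{T}}_i)\mid t^{\mathbb{T}}_i,l^{\mathbb{T}}_i,z,L(0),A)\,dz\,dL(0).
\end{equation*}
The crucial point is that the integrand factorizes over $i$. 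By Lemma~\ref{lemma2}, once $\tilde z=[z;r]$ and the full initial latent state $L(0)$ are fixed, the evolved state $L_{l^{\mathbb{T}}_i}(t^{\mathbb{T}}_i)=\mathcal{F}(t^{\mathbb{T}}_i,l^{\mathbb{T}}_i)$ is deterministic, even though the latent ODE couples the nodes through the interaction term $\bm{I}$. Each decoded output $X_{l^{\mathbb{T}}_i}(t^{\mathbb{T}}_i)\sim\mathcal{N}(\mu_{X_{l^{\mathbb{T}}_i}},\sigma^2_{X_{l^{\mathbb{T}}_i}})$ then depends only on its own coordinates and on the shared latents, so the outputs are mutually independent conditional on $(z,L(0))$. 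This conditional independence, rather than the unconditional independence that the node coupling would forbid, is what makes both conditions go through.

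For exchangeability, I would let $\pi$ be any permutation of $\{1,\dots,N\}$ and observe that the product $\prod_i p(X_{l^{\mathbb{T}}_i}(t^{\mathbb{T}}_i)\mid\cdot)$ is invariant under reindexing, while the two latent densities $q(z|\mathcal{D}_{\mathbb{C}},A)$ and $q(L(0)|\mathcal{D}_{\mathbb{C}},A)$ are independent of the target ordering (indeed the context encoder aggregates with the permutation-invariant element-wise mean). Hence permuting the target points and their outputs leaves the integral unchanged, giving $p(X_{l^{\mathbb{T}}_{\pi(1):\pi(N)}}(t^{\mathbb{T}}_{\pi(1):\pi(N)})\mid\cdot)=p(X_{l^{\mathbb{T}}_{1:N}}(t^{\mathbb{T}}_{1:N})\mid\cdot)$.

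For consistency, since exchangeability is already established it suffices to marginalize a terminal block of $N-M$ outputs with $M<N$. I would swap the outer $dz\,dL(0)$ integration with the $dX_{M+1:N}$ integration by Fubini (all factors are proper probability densities, hence nonnegative and integrable) and use that each removed factor integrates to one, $\int p(X_{l^{\mathbb{T}}_i}(t^{\mathbb{T}}_i)\mid\cdot)\,dX_{l^{\mathbb{T}}_i}(t^{\mathbb{T}}_i)=1$; what remains is exactly the marginal over the retained points $(t^{\mathbb{T}}_{1:M},l^{\mathbb{T}}_{1:M})$. Combining both conditions, the Kolmogorov extension theorem then yields a well-defined stochastic process. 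I expect the only genuine subtlety, rather than the routine Fubini and normalization bookkeeping, to be the interaction term in the latent ODE: one must be explicit that $L(0)$ denotes the joint initial state of all nodes, so that conditioning on $(z,L(0))$ renders the entire coupled trajectory deterministic, which is precisely the content of Lemma~\ref{lemma2} and the reason the node coupling does not break conditional independence.
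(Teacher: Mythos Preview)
Your proposal is correct and follows essentially the same approach as the paper: both arguments hinge on Lemma~\ref{lemma2} to obtain conditional independence of the outputs given the global latents, then use the product structure to verify permutation invariance and marginalization. The only cosmetic difference is that the paper rewrites the mixture as an integral over the random function $\mathcal{F}$ while you keep the integration over $(z,L(0))$; your version is arguably cleaner and your explicit invocation of Fubini and the Kolmogorov extension theorem makes the logic more transparent than the paper's own proof.
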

\begin{proof}
    First, we prove that NDP4ND satisfies the exchangeability condition.
    Following Lemma~\ref{lemma2}, for any given $\tilde{z}$ and $L_{l^{\mathbb{T}}_{i}}(0)$, any permutation $\pi$ of $\{1,2,...,N_{\mathbb{T}}\}$ on $t^{\mathbb{T}}_{1:N_{\mathbb{T}}}$ and $l^{\mathbb{T}}_{1:N_{\mathbb{T}}}$ would automatically act on $\mathcal{F}_{1:N_{\mathbb{T}}}$, i.e.,
    \begin{equation*}
        \mathcal{F}_{\pi(1:N_{\mathbb{T}})}=\mathcal{F}(t^{\mathbb{T}}_{\pi(1:N_{\mathbb{T}})},l^{\mathbb{T}}_{\pi(1:N_{\mathbb{T}})}),
    \end{equation*}
    where $\pi(1:N_{\mathbb{T}})=(\pi(1),\pi(2),...,\pi(N_{\mathbb{T}}))$.
    And then the permutation $\pi$ would act on the conditional predictive distribution, i.e.,
    \begin{equation*}
        [\mu_{{X}_{l^{\mathbb{T}}_{\pi(1:N_{\mathbb{T}})}}}(t^{\mathbb{T}}_{\pi(1:N_{\mathbb{T}})}); \sigma^2_{{X}_{l^{\mathbb{T}}_{\pi(1:N_{\mathbb{T}})}}}(t^{\mathbb{T}}_{\pi(1:N_{\mathbb{T}})})]=\bm{d}(t^{\mathbb{T}}_{\pi(1:N_{\mathbb{T}})},\mathcal{F}_{\pi(1:N_{\mathbb{T}})},\tilde{z}).
    \end{equation*} 
    Consequently, the permutation $\pi$ would act on the predictive distribution, i.e., $
p(X_{l^{\mathbb{T}}_{\pi(1:N_{\mathbb{T}})}}(t^{\mathbb{T}}_{\pi(1:N_{\mathbb{T}})})\vert t^{\mathbb{T}}_{\pi(1:N_{\mathbb{T}})},l^{\mathbb{T}}_{\pi(1:N_{\mathbb{T}})},\mathcal{D}_{\mathbb{C}},A)
$, because $\mathcal{Z}_1$ and $\mathcal{Z}_2$ in the distribution can also be seen as two deterministic functions dominated by $t^{\mathbb{T}}_i$ and $l^{\mathbb{T}}_{i}$.
Therefore, the exchangeability condition is guaranteed.

Then, we prove that NDP4ND satisfies the consistency condition. Based on Lemma~\ref{lemma2}, we can write the joint distribution similar to NPs \citep{DBLP:journals/corr/abs-1807-01622} or NDPs \citep{DBLP:conf/iclr/NorcliffeBDML21} as follows
\begin{equation*}
    \rho_{t^{\mathbb{T}}_{1:N_{\mathbb{T}}},l^{\mathbb{T}}_{1:N_{\mathbb{T}}}}(X_{l^{\mathbb{T}}_{1:N_{\mathbb{T}}}}(t^{\mathbb{T}}_{1:N_{\mathbb{T}}}))=\int{p(\mathcal{F})\prod_{i=1}^{N_{\mathbb{T}}}{p(X_{l^{\mathbb{T}}_{i}}(t^{\mathbb{T}}_{i})|\mathcal{F}(t^{\mathbb{T}}_{i},l^{\mathbb{T}}_{i}))}}d\mathcal{F}.
\end{equation*}
Since the probability density function of any $X_{l^{\mathbb{T}}_{i}}(t^{\mathbb{T}}_{i})$ depends only on the corresponding pair $t^{\mathbb{T}}_{i}$ and $l^{\mathbb{T}}_{i}$, integrating out any subset of $X_{l^{\mathbb{T}}_{1:N_{\mathbb{T}}}}(t^{\mathbb{T}}_{1:N_{\mathbb{T}}})$ gives the joint distribution of the remaining random variables in the sequence as 
\begin{equation*}
\begin{aligned}
    &\int\rho_{t^{\mathbb{T}}_{1:N_{\mathbb{T}}},l^{\mathbb{T}}_{1:N_{\mathbb{T}}}}(X_{l^{\mathbb{T}}_{1:N_{\mathbb{T}}}}(t^{\mathbb{T}}_{1:N_{\mathbb{T}}}))dX_{l^{\mathbb{T}}_{n+1:N_{\mathbb{T}}}}(t^{\mathbb{T}}_{n+1:N_{\mathbb{T}}})\\
    &=\int\int{p(\mathcal{F})\prod_{i=1}^{N_{\mathbb{T}}}{p(X_{l^{\mathbb{T}}_{i}}(t^{\mathbb{T}}_{i})|\mathcal{F}(t^{\mathbb{T}}_{i},l^{\mathbb{T}}_{i}))}}d\mathcal{F}dX_{l^{\mathbb{T}}_{n+1:N_{\mathbb{T}}}}(t^{\mathbb{T}}_{n+1:N_{\mathbb{T}}})\\
    &=\int{p(\mathcal{F})\prod_{i=1}^{n}{p(X_{l^{\mathbb{T}}_{i}}(t^{\mathbb{T}}_{i})|\mathcal{F}(t^{\mathbb{T}}_{i},l^{\mathbb{T}}_{i}))}}d\mathcal{F}\\
    &=\rho_{t^{\mathbb{T}}_{1:n},l^{\mathbb{T}}_{1:n}}(X_{l^{\mathbb{T}}_{1:n}}(t^{\mathbb{T}}_{1:n})).
\end{aligned}
\end{equation*}
Therefore, consistency is also guaranteed.
\end{proof}

\section{Experimental Details}

In this section, we introduce specific experimental details, including the construction for training and testing sets, 
the detailed description of the network dynamics scenarios, baselines, metrics, and experimental setup.

\subsection{Construction for training and testing sets}

We assign randomly sampled values from the parameter to the tunable parameters in the ODE templates to obtain massive network ODE instances for each network dynamics scenario.
Based on different ODE instances, topologies, and initial states, we randomly generate $N_{Tr}$ training tasks and $N_{Te}$ testing tasks per scenario.
Note that the sampled ODE instances to produce testing tasks have not appeared in the training set.
Therefore, for the model, the testing tasks to be handled come from the emerging dynamics.
We irregularly and sparsely sample observations from each testing task to trigger predictions (Fig.~\ref{fig_obs}).

\begin{figure}[t]%
\centering
\includegraphics[width=.6\textwidth]{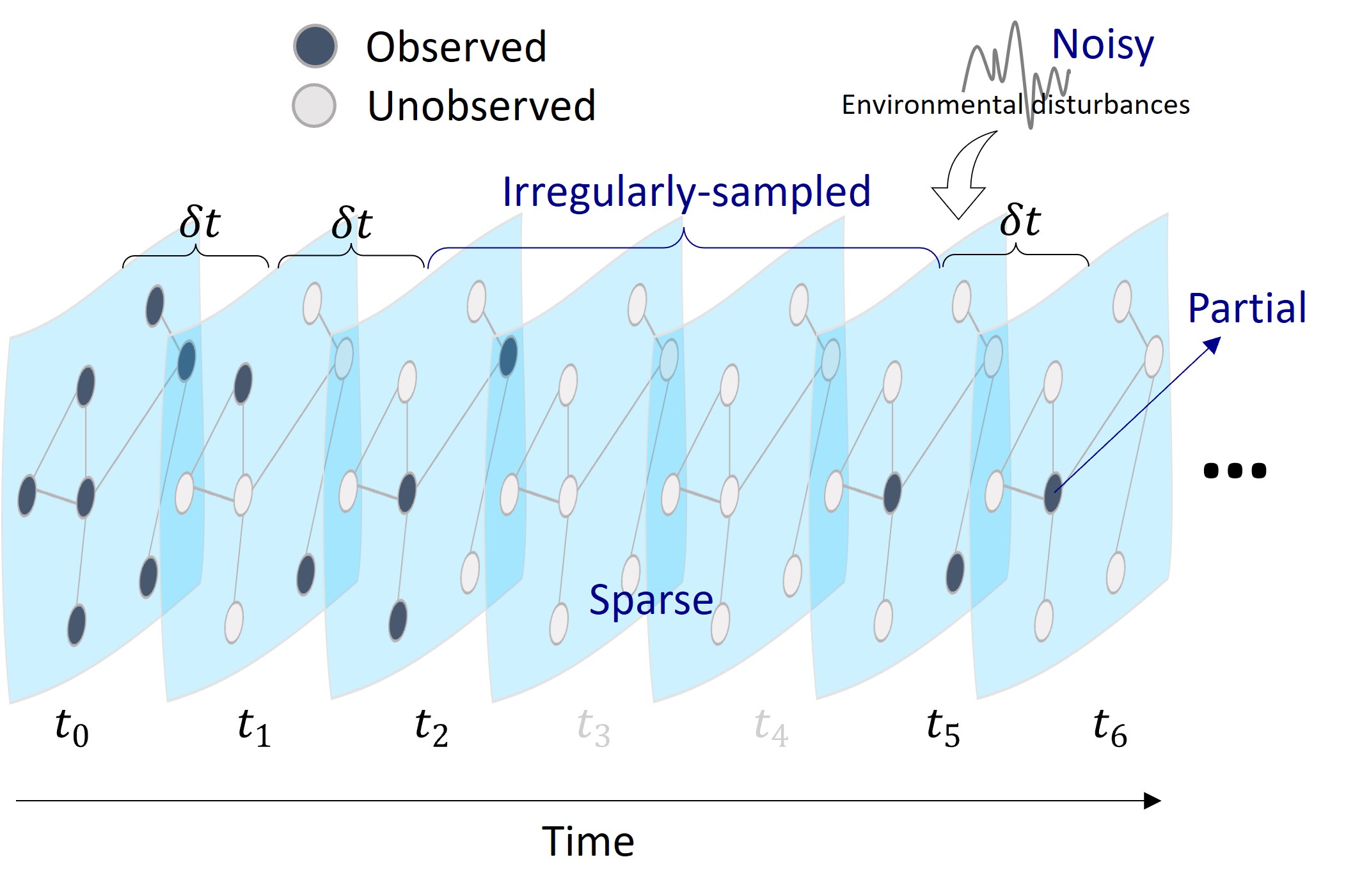}
\caption{
An illustration to show that the observations triggering predictions are sparse, irregularly-sampled, partial, and noisy.
}\label{fig_obs}
\end{figure}

\subsection{Network dynamics scenarios}

We carried out extensive experiments on various complex network dynamics scenarios, including mutualistic interaction dynamics in ecosystems, second-order phototaxis dynamics, brain dynamics, compartment models in epidemiology, and empirical systems.


\subsubsection{Mutualistic interacting dynamics}
Mutualistic interacting dynamics among species in ecology \citep{JianxiNature2016} capture the abundance $X_i(t)$ of species $i$.
The differential equation that characterizes abundance changes is as follows
\begin{equation*}
   \begin{aligned} 
   \dot{X}_i(t)=b+X_i(t)(1-\frac{X_i(t)}{k})(\frac{X_i(t)}{c}-1)+\sum_jA_{i,j} (X_j(t)-X_i(t))\frac{X_i(t)X_j(t)}{d+eX_i(t)+hX_j(t)},
   \end{aligned}
\end{equation*}
where $b$ accounts for the rate of the incoming migration from neighboring ecosystems. 
The second term describes logistic growth with the system carrying capacity $k$ and the Allee effect with negative growth threshold $c$. 
The third term describes mutualistic interactions, captured by a response function that saturates for large $X_i(t)$ or $X_j(t)$, indicating that $j$’s positive contribution to $X_i(t)$ is bounded.

The detailed parameter setting for mutualistic interacting dynamics in epidemiology is shown in Table~\ref{table_set_mutu}.

\begin{table}[htb]
\centering
\caption{
Parameter setting for Mutualistic interacting dynamics
}
\label{table_set_mutu}
\begin{tabular}{cccc}
\toprule
Network space $\mathcal{A}$                                            &   Parameter space $\Phi$         &  \# Nodes $n$  &  Initial state space $\mathcal{I}$ \\\midrule
\makecell[c]{Grid,\\ Power law,\\ Random,\\ Small world,\\ Community}  & $\begin{aligned} &b\in [0.05, 0.15],\\ &c\in [0.5, 1.5],\\ &d\in [4, 6], \\ &e\in [0.8, 1.0],\\ &h\in [0.05, 0.15],\\ &k\in [4, 6].\end{aligned}$  &    $\mathcal{U}^{\dagger}[100,200]$&   $\mathcal{U}[0,25]$ \\\midrule\midrule
\# Training tasks $N_{Tr}$ &   \# Testing tasks $N_{Te}$   & Min. sampling interval $\delta t$ & Max. observed time $T_o$ \\\midrule
1,500                      &   100                         &        1                          &             50              \\
\bottomrule                
\end{tabular}
\footnotetext[\dagger]{$\mathcal{U}$ means the uniform distribution.}
\end{table}

\subsubsection{Second-order phototaxis dynamics}

Phototaxis dynamics is a second-order system simulating the dynamics of phototactic bacteria towards a fixed light source \citep{phototaxis}, which can be applied to movement of bacteria towards food sources \citep{doi:10.1073/pnas.1822012116} and emergency evacuation modeling \citep{emergency_airplane_evacuations}.
The system performs bacteria's movements through the excitation level of bacteria to the light and their interactions with each other \citep{doi:10.1073/pnas.1822012116}.
The system has five observed states: coordinate 1 (${X}_{i,0}(t)$), coordinate 2 (${X}_{i,1}(t)$), velocity 1 (${X}_{i,2}(t)$), velocity 2 (${X}_{i,3}(t)$) and excitation level (${X}_{i,4}(t)$).
The differential equations that characterize system behavior are as follows
\begin{equation*}
    \begin{aligned} 
        \dot{X}_{i,0:1}(t)=&{X}_{i,2:3}(t),\\
        \dot{X}_{i,2:3}(t)=&\frac{\lambda_1}{n}\sum_{j=1}^{n}k_1({X}_{j,0:1}(t),{X}_{i,0:1}(t))({X}_{j,2:3}(t)-{X}_{i,2:3}(t))\\&+I_0(V-{X}_{i,2:3}(t))(1-\varphi({X}_{i,4}(t);\zeta_{cr})),\\
        \dot{X}_{i,4}(t)=&\frac{\lambda_2}{n}\sum_{j=1}^{n}k_2({X}_{j,0:1}(t),{X}_{i,0:1}(t))({X}_{j,4}(t)-{X}_{i,4}(t))+I_0\varphi({X}_{i,4}(t);\zeta_{cp}),
    \end{aligned}
\end{equation*}
where, 
\begin{equation*}
    \begin{aligned} 
        &k_1(x_j,x_i)=k_2(x_j,x_i)=\frac{1}{(1+\lvert x_j-x_i\rvert^2)^{\beta}},\\
        &\varphi(\zeta;\zeta_{c})= \left\{
        \begin{aligned}
	&1, \quad &0\leq\zeta<\zeta_{c},\\
	&0.5(\cos{(\frac{\pi}{\zeta_{c}}(\zeta-\zeta_{c}))}+1), \quad &\zeta_{c}\leq\zeta<2\zeta_{c},\\
	&0, \quad &2\zeta_{c}\leq\zeta.\\
	\end{aligned}
	\right.
    \end{aligned}
\end{equation*}
$k_1$ and $k_2$ are the alignment-based interaction kernels, 
$\varphi$ is the smooth cutoff function,
$I_0$ is the light intensity, $V$ is the terminal velocity (light source at infinity), $\zeta_{cr}$ is the critical excitation level (when the light effect activates the bacteria), and $\zeta_{cp}$ is the excitation capacity.

The detailed parameter setting for the second-order phototaxis dynamics is shown in Table~\ref{table_set_phototaxis}.

\begin{table}[htb]
\centering
\caption{
Parameter setting for Second-order phototaxis dynamics
}
\label{table_set_phototaxis}
\begin{tabular}{cccc}
\toprule
Network space $\mathcal{A}$                                            &   Parameter space $\Phi$         &  \# Nodes $n$  &  Initial state space $\mathcal{I}$ \\\midrule
Complete graph  & $\begin{aligned} &\lambda_1=\lambda_2=100,\\&V=[60, 0],\\ &\zeta_{cp}=2\zeta_{cr},\\ &I_0\in [0.01, 1.0], \\ &\beta\in [-0.4, -0.1],\\ &\zeta_{cr}\in [0.1, 0.5].\end{aligned}$  &    $\mathcal{U}^{\dagger}[20,50]$&   \makecell[c]{$X_{i,0:3}\sim\mathcal{U}[0,100]^4$ \\$X_{i,4}\sim\mathcal{U}[0,\text{1e-3}]$ }\\\midrule\midrule
\# Training tasks $N_{Tr}$ &   \# Testing tasks $N_{Te}$   & Min. sampling interval $\delta t$ & Max. observed time $T_o$ \\\midrule
300                      &   20                         &        0.01                         &             0.5              \\
\bottomrule                
\end{tabular}
\footnotetext[\dagger]{$\mathcal{U}$ means the uniform distribution.}
\end{table}

\subsubsection{Brain dynamics}

We consider a kind of neuronal dynamics dynamics that brain activities are governed by the FitzHugh-Nagumo dynamics \citep{FHN2020}, which has a quasi-periodic characteristic.
Topological structures with power-law distribution are used to simulate the interactions among brain functional areas \citep{Gao2022}.
The dynamics capture the firing behaviors of neurons with two components, i.e., membrane potential ($X_{i,0}(t)$) and recovery variable ($X_{i,1}(t)$).
The differential equations that characterize brain activities are as follows
\begin{equation*}
    \begin{aligned} 
        \dot{X}_{i,0}(t)=&{X}_{i,0}(t)-{X}_{i,0}(t)^{3}-{X}_{i,1}(t)-\epsilon\sum_{j=1}^{n}A_{i,j}\frac{{X}_{j,0}(t)-{X}_{i,0}(t)}{Deg_{i}^{in}},\\
        \dot{X}_{i,1}(t)=&a+b{X}_{i,0}(t)+c{X}_{i,1}(t),\\
    \end{aligned}
\end{equation*}
where, $Deg_{i}^{in}$ is the indegree of node $i$. 

The detailed parameter setting for brain dynamics is shown in Table~\ref{table_set_brain}.
\begin{table}[htb]
\centering
\caption{
Parameter setting for Brain dynamics
}
\label{table_set_brain}
\begin{tabular}{cccc}
\toprule
Network space $\mathcal{A}$                                            &   Parameter space $\Phi$         &  \# Nodes $n$  &  Initial state space $\mathcal{I}$ \\\midrule
Power law & $\begin{aligned} &a\in [0.2, 0.3],\\&b\in [0.4, 0.6],\\ &c\in [-0.06, -0.02],\\ &\epsilon\in [0.8, 1.2].\end{aligned}$  &    $\mathcal{U}^{\dagger}[100,200]$&   $\mathcal{U}[-1,1]^{2}$ \\\midrule\midrule
\# Training tasks $N_{Tr}$ &   \# Testing tasks $N_{Te}$   & Min. sampling interval $\delta t$ & Max. observed time $T_o$ \\\midrule
300                      &   20                         &        0.01                          &             0.5              \\
\bottomrule                
\end{tabular}
\footnotetext[\dagger]{$\mathcal{U}$ means the uniform distribution.}
\end{table}

\subsubsection{Compartment models in epidemiology}

Compartment models \citep{doi:10.1287/educ.1100.0075,individualSIR} are often used to simulate the spread of epidemics on networks. 
Herein, we conduct experiments on three commonly used compartment models: susceptible-infectious-susceptible (SIS), susceptible-infectious-recovered (SIR), and susceptible-exposed-infectious-susceptible (SEIS).

\textbf{SIS}: 
The susceptible-infectious-susceptible (SIS) model has two states, i.e., susceptible and infectious.
This model assumes that individual states not only transition from susceptible to infectious with an infection rate $b$ but also reverse with a recovery rate $r$.
The differential equations that characterize the model are as follows
\begin{equation*}
    \begin{aligned} 
        &S: \dot{X}_{i,0}(t)=-b\times{X}_{i,0}(t)\times\sum_j A_{i,j}{X}_{j,1}(t)+r\times{{X}_{i,1}(t)},\\
        &I: \dot{X}_{i,1}(t)=b\times{X}_{i,0}(t)\times\sum_j A_{i,j}{X}_{j,1}(t)-r\times{{X}_{i,1}(t)}.
    \end{aligned}
\end{equation*}

\textbf{SIR}:
The susceptible-infectious-recovered (SIR) model has three states, i.e., susceptible, infectious, and recovered.
The recovered state refers to an individual who has recovered from an epidemic.
It assumes that 1) individual states transition from susceptible to infectious with an infection rate $b$; 2) individual states transition from infectious to recovered with a recovery rate $r$.
In this model, the recovered individual has immunity, so he cannot be infected again.
The differential equations that characterize the model are as follows
\begin{equation*}
    \begin{aligned} 
        &S: \dot{X}_{i,0}(t)=-b\times{X}_{i,0}(t)\times\sum_j A_{i,j}{X}_{j,1}(t),\\
        &I: \dot{X}_{i,1}(t)=b\times{X}_{i,0}(t)\times\sum_j A_{i,j}{X}_{j,1}(t)-r\times{X}_{i,1}(t),\\
        &R: \dot{X}_{i,2}(t)=r\times{X}_{i,1}(t).
    \end{aligned}
\end{equation*}

\textbf{SEIS}:
The susceptible-exposed-infectious-susceptible (SEIS) model has three states: susceptible, exposed, and infectious.
The exposed state refers to an individual who has come into contact with infected individuals but is currently not contagious.
The model assumes that 1) individual states transition from susceptible to exposed with an infection rate $b$; 2) individual states transition from exposed to infectious with an attack rate $c$; 3) individual states transition from infectious to susceptible with a recovery rate $r$.
The differential equations that characterize the model are as follows
\begin{equation*}
    \begin{aligned} 
        &S: \dot{X}_{i,0}(t)=-b\times{X}_{i,0}(t)\times\sum_j A_{i,j}{X}_{j,1}(t)+r\times{{X}_{i,1}(t)},\\
        &E: \dot{X}_{i,1}(t)=b\times{X}_{i,0}(t)\times\sum_j A_{i,j}{X}_{j,1}(t)-c\times{{X}_{i,1}(t)},\\
        &I: \dot{X}_{i,2}(t)=c\times{{X}_{i,1}(t)}-r\times{{X}_{i,2}(t)}.
    \end{aligned}
\end{equation*}

The detailed parameter settings for compartment models in epidemiology are shown in Table~\ref{table_set_compartment}.

\begin{table}[htb]
\centering
\caption{
Parameter setting for Compartment models in epidemiology (SIS, SIR, and SEIS)
}
\label{table_set_compartment}
\begin{tabular}{cccc}
\toprule
Network space $\mathcal{A}$                                            &   Parameter space $\Phi$         &  \# Nodes $n$  &  Initial state space $\mathcal{I}$ \\\midrule
Power law  & 
\makecell[l]{
\makecell[l]{For SIS and SIR:}\\
\makecell[r]{
$\begin{aligned} &b\in [0.02, 0.2], \\ &r\in [0.1, 0.4].\end{aligned}$}\\
\makecell[l]{For SEIS:}\\
\makecell[r]{
$\begin{aligned} &b\in [0.3, 2.0], \\ &r\in [0.1, 0.4], \\ &c\in[0.05,0.1].\end{aligned}$}}  &    $\mathcal{U}^{\dagger}[100,200]$& \makecell[l]{
\makecell[l]{For SIS:}\\
\makecell[l]{$\begin{aligned} &S: X_{i,0}=1-X_{i,1},\\ &I: X_{i,1}\sim\mathcal{U}[\text{1e-6},\text{1e-3}]\end{aligned}$}\\
\makecell[l]{For SIR:}\\
\makecell[l]{$\begin{aligned} &S: X_{i,0}=1-X_{i,1}-X_{i,2}, \\&I: X_{i,1}\sim\mathcal{U}[\text{1e-6},\text{1e-3}], \\&R: X_{i,2}=0\end{aligned}$}\\
\makecell[l]{For SEIS:}\\
\makecell[l]{$\begin{aligned} &S: X_{i,0}=1-X_{i,1}-X_{i,2},\\ &E: X_{i,1}=0, \\ &I: X_{i,2}\sim\mathcal{U}[\text{1e-6},\text{1e-3}]\end{aligned}$}
}  \\\midrule\midrule
\# Training tasks $N_{Tr}$ &   \# Testing tasks $N_{Te}$   & Min. sampling interval $\delta t$ & Max. observed time $T_o$ \\\midrule
1,000 per model                      &   100 per model                       &        1                          &             50              \\
\bottomrule                
\end{tabular}
\footnotetext[\dagger]{$\mathcal{U}$ means the uniform distribution.}
\end{table}

\subsubsection{Empirical systems}

For the real-world systems, we collect daily global spreading data on three real diseases, including 
H1N1 \citep{dataweb},
SARS \citep{dataweb},
and COVID-19 \citep{covid19data}, and use the worldwide airline network retrieved from OpenFights as a directed and weighted empirical topology.
Only early data before government intervention, i.e., the first 50 days, is considered here to keep the spread features of the disease itself.
We divide H1N1 data into 1,000 standardized training tasks according to various time intervals, where the time interval is sampled from $\{2,3,...,21\}$ and the start time is sampled from $\{0,1,...,49\}$.
Then, we test methods on all three diseases.
As a result, testing on H1N1 can be seen as predicting re-emerging infectious diseases, as the method trained on the same data but testing on SARS and COVID-19 as predicting emerging infectious diseases.

\subsection{Baselines}

The methods we compared in the experiments are listed below.

\begin{itemize}
            \item LG-ODE \citep{NEURIPS2020_ba484941}: The first work is to learn continuous-time system dynamics from irregularly-sampled partial observations,
            which is to use a carefully-designed spatio-temporal transformer as the encoder and use a graph neural network to model the network dynamic equation.
            \item NDCN \citep{NDCN}: A representative neural dynamics method, explicitly proposed for automatically modeling network dynamics. Its main idea is to use a graph neural network to model the network dynamic equation in the hidden space mapped by the neural encoder, and can learn continuous-time dynamics on complex networks with irregularly-sampled observations.
            \item DNND \citep{ijcai2023p242}: A state-of-the-art neural dynamics method for automatically modeling network dynamics. It
            alleviates the limitations of the NDCN in long-term prediction by abandoning the encoder and decoder, directly models network differential equations in the state space, and simplifies the design of neural networks using first principles, achieving state-of-the-art results.
            \item NP \citep{DBLP:journals/corr/abs-1807-01622}: Stochastic processes parameterized by neural networks.
            \item NDP \citep{DBLP:conf/iclr/NorcliffeBDML21}: A class of stochastic processes, generalized NPs defined over time by combining latent neural ODEs with NPs.
            \item NDP4ND w/o ode: A variant of our NDP4ND, removing ODE flow in the architecture, i.e., deleting $L_{l_{i}^{\mathbb{T}}}(t_{i}^{\mathbb{T}})$ from the input of neural network $\bm{d}$.
            \item NDP4ND w/o z: A variant of our NDP4ND, removing $z$ from the input of neural network $\bm{d}$ in the architecture.
\end{itemize}

We note that NPs and NDPs are not specifically designed for network dynamics learning.
Using them as comparison methods in experiments is to verify the necessity of introducing interactions of time dynamics on networks.

\subsection{Metrics}
The following metrics are used for performance comparison: 

\begin{itemize}
    \item Mean Absolute Error (MAE): MAE quantifies the mean absolute error between the predictions and the true values over all nodes and at all times, which can be calculated as
    \begin{equation*}
    \begin{aligned} 
        \frac{1}{nTD}\sum_{i=1}^{n}\sum_{t=1}^{T}\sum_{d=1}^{D}|X_{i}(t)[d]-\hat{X}_{i}(t)[d]|,
    \end{aligned}
    \end{equation*}
    where, $n$, $T$, and $D$ are the number of nodes in the system, the maximum prediction time, and the dimension of observable states, respectively.
    $\hat{X}_{i}(t)[d]$ is the $d$-th dimension predictive state on node $i$ at time $t$ and $X_{i}(t)[d]$ is the ground truth.
    \item Dynamic Time Warping (DTW): DTW quantifies the similarity between predictive system dynamic behavior and ground truth, which can be calculated as
    \begin{equation*}
    \begin{aligned} 
        \frac{1}{nD}\sum_{i=1}^{n}\sum_{d=1}^{D}dtw(X_{i}(.)[d]-\hat{X}_{i}(.)[d]),
    \end{aligned}
    \end{equation*}
    where, $n$ and $D$ are the number of nodes in the system and the dimension of observable states, respectively.
    $dtw(.)$ is a FastDTW method \citep{Salvador2004FastDTWTA} that is an approximation of dynamic time warping with linear time and space complexity.
    \item Observed ratio: We define the observed ratio as $\frac{N_{\mathbb{C}}}{((t_{o}-t_{s})/\delta t+1)\times \#nodes}\times 100\%$ to quantify the sampling density of observations, where $N_{\mathbb{C}}$ is the number of observations, $\delta t$ is the minimum sampling interval, $t_{s}$ is the start time, and $t_{o}$ is maximum time of all observations.
\end{itemize}

\subsection{Experimental setup}

In the training phase, we use the Adam algorithm \citep{adam} to optimize parameters in the model.
Specifically, the initial learning rate is set to 5e-3, followed by an exponential decay with a decay coefficient of 10.
The batch size and epochs are set to 8 and 200, respectively.
All experiments were conducted on the hardware environment with the same computational resources, including Intel(R) Xeon(R) Silver 4310 CPU@2.10GHz $\times$ 48 and NVIDIA GeForce RTX 3090 $\times$ 3.















\end{appendix}





\bibliography{supplement}